\documentclass[11pt,a4paper]{dream2discern_preprint}

\usepackage{amsmath,amsfonts,bm}

\def\eqref#1{equation~\ref{#1}}

\def\1{\bm{1}}

\DeclareMathAlphabet{\mathsfit}{\encodingdefault}{\sfdefault}{m}{sl}
\SetMathAlphabet{\mathsfit}{bold}{\encodingdefault}{\sfdefault}{bx}{n}

\usepackage{booktabs,float,placeins,wrapfig,subcaption,amsthm,multirow,array}
\usepackage{algorithm,algpseudocode,pifont}
\usepackage{fontawesome5}
\usepackage[capitalize,noabbrev,nameinlink]{cleveref}
\renewcommand{\footnoterule}{\kern-3pt\hrule width\textwidth height0.45pt\kern2.55pt}

\definecolor{mydarkred}{rgb}{0.6,0,0}
\definecolor{myblue}{HTML}{268BD2}
\definecolor{thresholdgreen}{HTML}{7BC86C}
\definecolor{mygreen}{HTML}{658354}
\definecolor{blue}{HTML}{c9e9f6}
\definecolor{pink}{HTML}{F77189}
\definecolor{venuegray}{RGB}{120,120,120}

\definecolor{tablegray}{gray}{0.92}
\definecolor{sctshighlight}{RGB}{217,240,248}
\definecolor{baselinegray}{RGB}{240,240,240}
\newcommand{\method}{\textit{Dream2Discern}}

\newtheorem{proposition}{Proposition}

\preprintheadmark{\includegraphics[height=21pt]{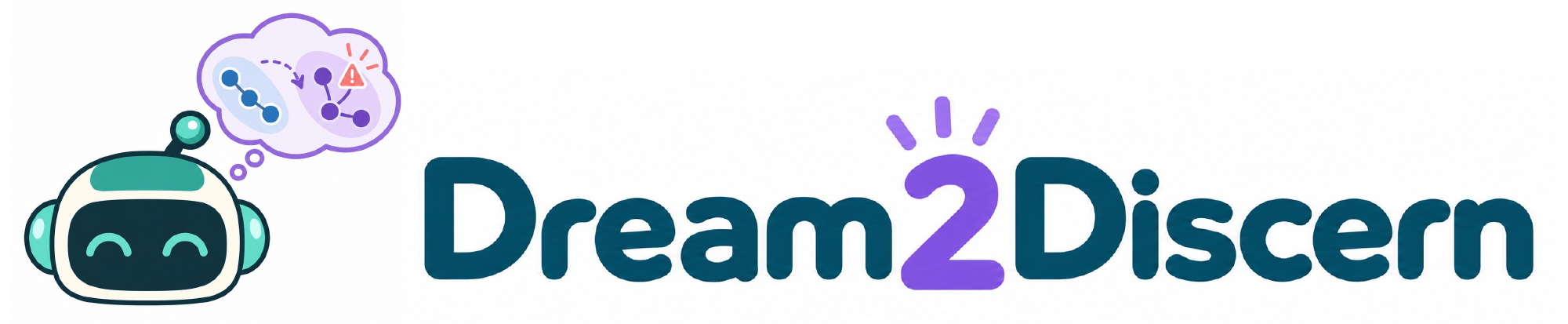}}
\title{Dream\textcolor{d2dDream}{2}Discern:
Learning from Hard Latent
Variants for Runtime Failure
Prediction in Multi-Agent Systems}
\author{Boxuan Zhang$^{1}$ \quad Jianing Zhu$^{2}$ \quad Xiaotian Han$^{3}$ \quad Ruixiang Tang$^{1,\dagger}$}
\preprintaffiliations{%
  \mbox{$^{1}$\,\raisebox{\dimexpr.65ex-\height/2\relax}{\includegraphics[height=2.15ex]{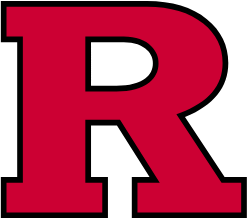}}\ \,Rutgers University}%
  \hspace{1.1em}%
  \mbox{$^{2}$\,\raisebox{\dimexpr.65ex-\height/2\relax}{\includegraphics[trim=221 269 1064 212,clip,height=2.4ex]{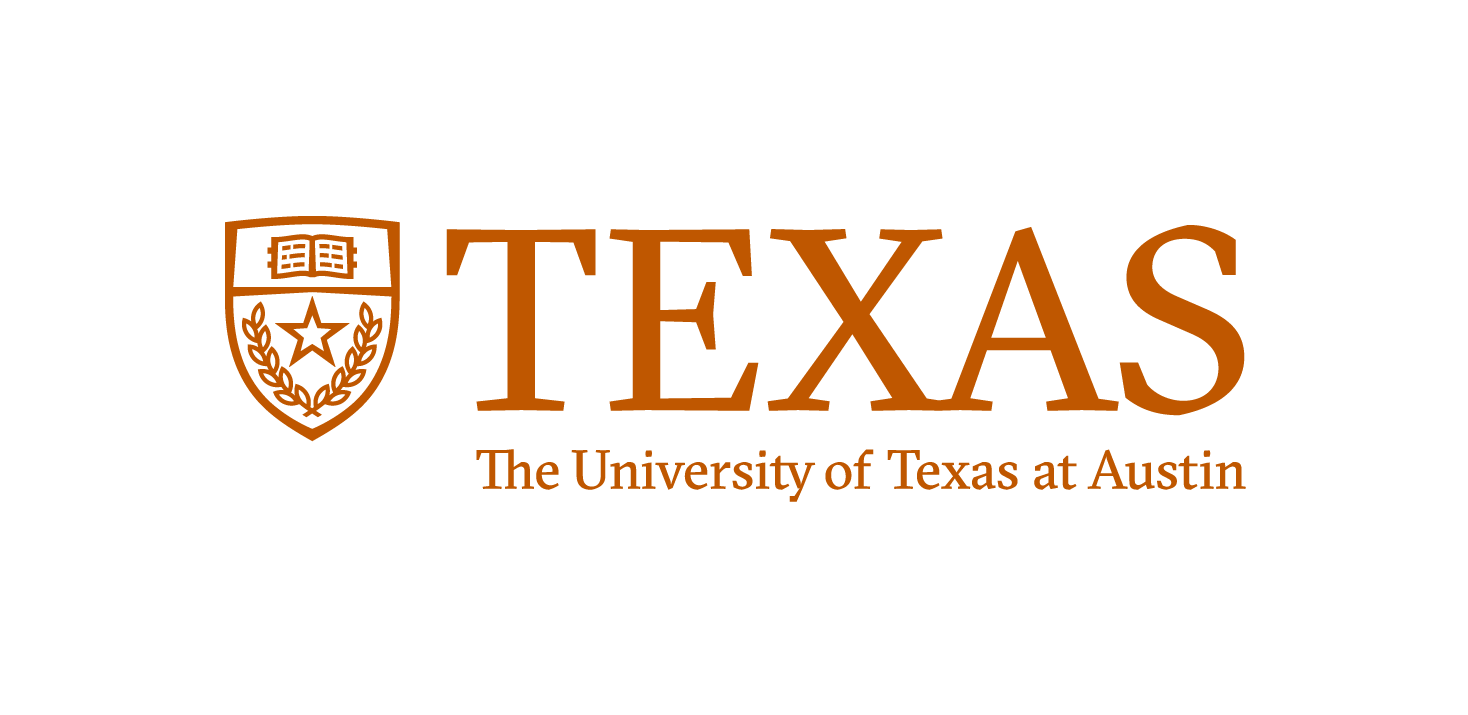}}\ \,The University of Texas at Austin}%
  \hspace{1.1em}%
  \mbox{$^{3}$\hspace{0.35em}\raisebox{\dimexpr.3ex-\height/2\relax}{\includegraphics[height=2.4ex]{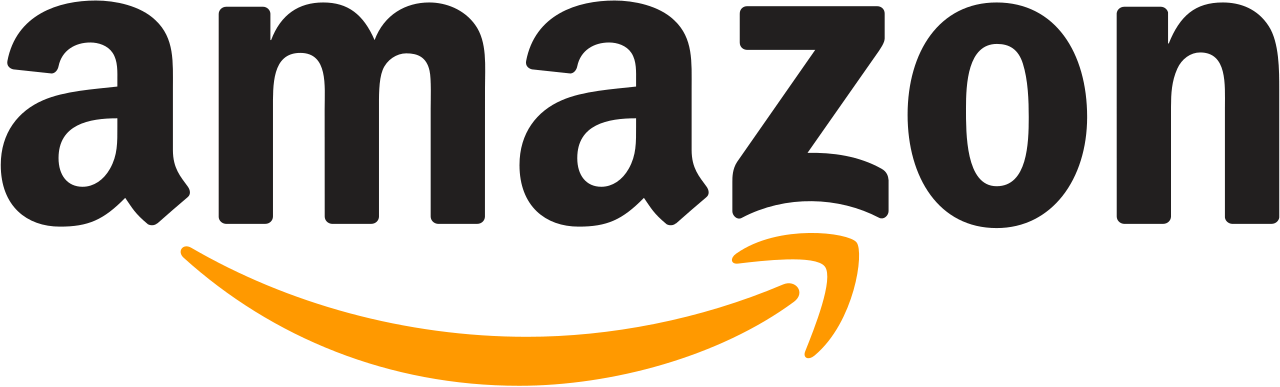}}}}
\preprintlinks{%
  {\color{d2dTeal}\faGlobe}\ \textbf{Project Page:}\ \textcolor{d2dBlue}{\texttt{https://zbox1005.github.io/dream2discern-project}}\par
  {\color{d2dTeal}\faGithub}\ \textbf{Code:}\ \textcolor{d2dBlue}{\texttt{https://github.com/ZBox1005/Dream2Discern}}\par
  {\color{d2dTeal}\faEnvelope}\ \textbf{Contact:}\ \texttt{\{boxuan.zhang, ruixiang.tang\}@rutgers.edu}}
\preprintstatus{Research preprint}
\date{September 2026}
\hypersetup{pdftitle={Dream2Discern: Learning from Hard Latent Variants for Runtime Failure Prediction in Multi-Agent Systems},
  pdfauthor={Boxuan Zhang, Jianing Zhu, Xiaotian Han, Ruixiang Tang}}

\begin{document}
\makepreprinttitle
\begin{preprintfrontmatter}
\makepreprintauthors

\begin{abstract}
Reliable deployment of LLM-based multi-agent systems calls for runtime auditors that predict terminal outcomes from observed execution prefixes.
These auditors typically learn from a limited corpus of trajectories with verified outcomes.
Expanding this corpus requires costly execution and verification, yet broader failure coverage alone provides no explicit criterion for identifying prefixes that the current auditor finds difficult to classify.
To this end, we propose \textbf{\textit{Dream2Discern}}, a runtime auditor trained through implicit augmentation guided by its own prediction loss.
During \textit{Dream}, we select perturbations of the auditor's representation operators to increase prediction loss on recorded prefixes, yielding \emph{hard latent variants}.
During \textit{Discern}, the auditor learns to predict execution outcomes from these variants using the original labels.
The variants are regenerated as the auditor learns, turning a limited corpus into a renewable source of hard examples and requiring no additional trajectory collection or outcome verification.
Extensive evaluations on \textsc{AFTraj-2K} and StepShield show that \textit{Dream2Discern} outperforms competitive failure auditors and achieves over 90\% trajectory-level weighted accuracy on both benchmarks while balancing prediction accuracy and timeliness. 
\end{abstract}
\makepreprintlinks
\end{preprintfrontmatter}
{\renewcommand{\thefootnote}{\fnsymbol{footnote}}\footnotetext[2]{\emph{Corresponding author}}}

\section{Introduction}
\label{sec:introduction}

LLM-based multi-agent systems coordinate specialized agents and tools to solve complex tasks, from software development to open-ended web interaction~\citep{wu2024autogen,hong2023metagpt,fourney2024magentic}.
However, the complexity of these tasks and their execution environments makes agents prone to reasoning and tool-use errors that can potentially compromise task completion~\citep{cemri2026multi,zhang2025agent}.
In open-world environments, these errors can lead agents to make unnecessary tool calls or modify external systems in unintended ways~\citep{li2023deepinception,ruan2024identifying,oh2026uncertainty,zhang2025dive}.
Accordingly, reliable deployment calls for \emph{runtime failure prediction}, where an auditor assesses the risk of eventual task failure from the execution history observed so far to inform timely intervention~\citep{zhang2026agentforesight,huang2026prefixguard}.

Despite its importance, existing learning-based runtime auditors, such as TRACES~\citep{li2026traces} and AgentForesight~\citep{zhang2026agentforesight}, typically rely on a limited corpus of carefully curated trajectories with verified outcomes, as illustrated in Figure~\ref{fig:scts-motivation}(a).
A straightforward way to improve these auditors is to augment the training corpus to cover a broader range of failure scenarios, either by collecting new trajectories or by editing existing ones (Figure~\ref{fig:scts-motivation}(b)).
However, collecting new trajectories requires agent rollouts and outcome verification, making corpus expansion costly to scale~\citep{xie2026agentsynth,xu2025agenttrek}.
Editing recorded trajectories reuses the observed history, but changing an action or tool response can alter the continuation and invalidate the original outcome label, requiring resumed execution and re-verification~\citep{kong2026aegis,bonagiri2026causalflow}.
More fundamentally, broader failure coverage alone provides no explicit criterion for identifying trajectories whose prefixes the current auditor finds difficult to assess.
These considerations naturally motivate the following research question:
\begin{preprintquestion}
{\color{d2dTeal}\small\bfseries RESEARCH QUESTION}\par\smallskip
\emph{\textbf{Can we identify and exploit the current auditor's hard cases without acquiring additional labeled trajectories?}}
\end{preprintquestion}

To address this question, we propose an implicit augmentation approach that constructs hard latent variants within the auditor (Figure~\ref{fig:scts-motivation}(c)).
\textbf{Our key insight is to use the auditor's own prediction loss as a criterion for deciding which variants to construct.}
We perturb the auditor's representation computation to increase prediction loss on recorded prefixes, yielding \emph{hard latent variants} for training with the original outcome labels.
To realize this idea, we develop \textbf{\method{}}, a runtime auditor that alternates between two training steps: constructing hard latent variants (\emph{Dream}) and learning from them to discern execution outcomes (\emph{Discern}).
Specifically, during \emph{\textbf{Dream}}, we search for bounded, low-rank perturbations to the auditor's query and value projection weights that increase batch prediction loss on recorded prefixes.
We keep the prediction head fixed during this search, thus restricting the induced difficulty to changes in prefix representations. 
In \emph{\textbf{Discern}}, we train the auditor to distinguish success from failure under the selected perturbation. 
At runtime, the trained auditor operates without perturbations, assigning a failure score to each observed prefix and applying a calibrated threshold to predict the terminal outcome. 
Together, the \emph{Dream} and \emph{Discern} steps alternate throughout training, allowing the constructed variants to evolve with the auditor.


\begin{figure}[t]
    \centering
    \includegraphics[width=\linewidth]{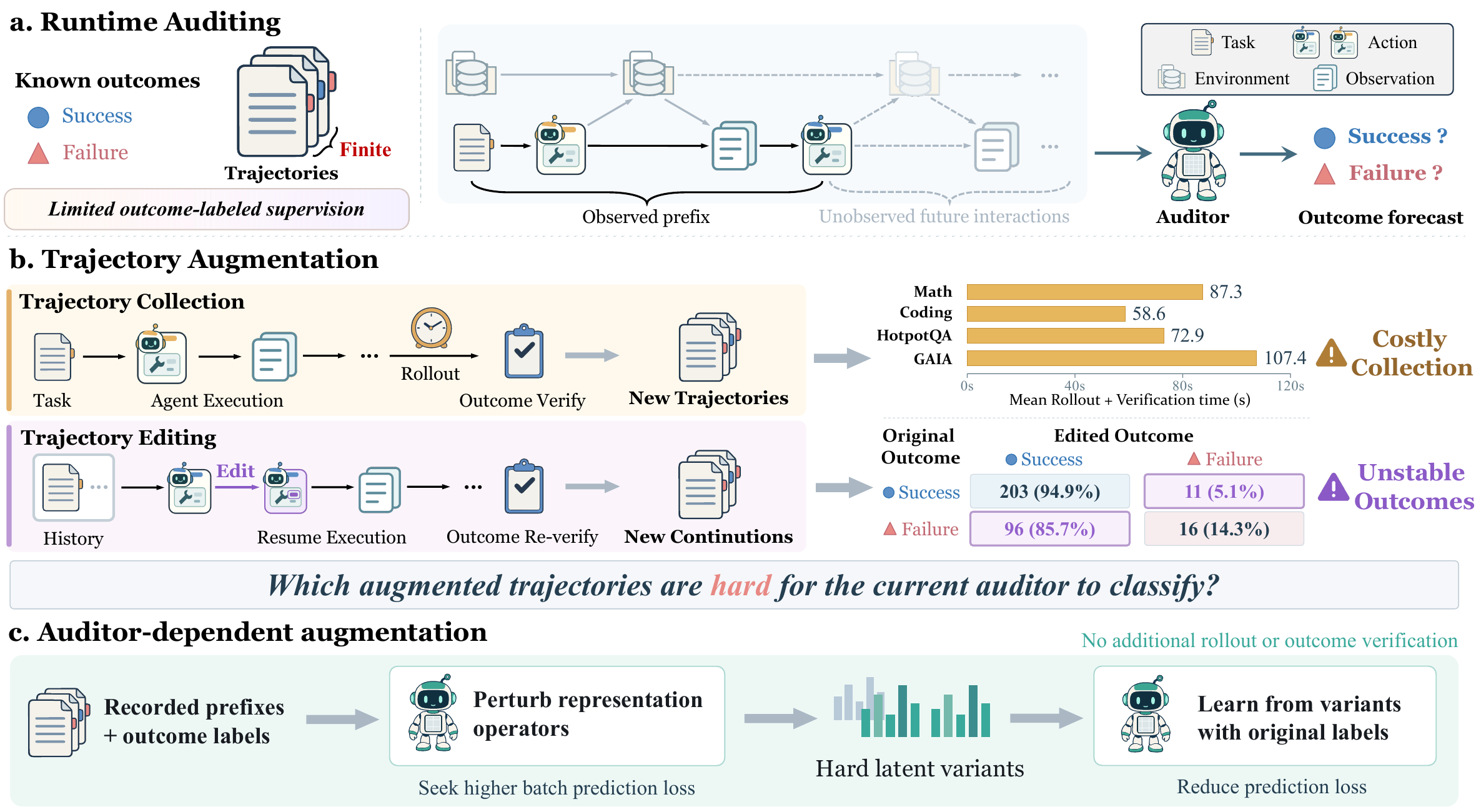}
    \caption{\textbf{Motivation of \method{}.}
    \textbf{(a)} A runtime auditor predicts terminal outcomes from observed prefixes but learns from a limited labeled corpus.
    \textbf{(b)} Trajectory augmentation seeks broader failure coverage, but collection is costly and editing can invalidate outcome labels (right).
    Broader coverage alone does not identify prefixes that the current auditor finds hard to classify.
    \textbf{(c)} We construct hard latent variants through operator perturbations chosen to increase batch prediction loss, then train on these variants while keeping recorded prefixes and labels unchanged.}
    \label{fig:scts-motivation}
\end{figure}

We conduct extensive runtime evaluations of \textit{Dream2Discern} on \textsc{AFTraj-2K}~\citep{zhang2026agentforesight} and StepShield~\citep{felicia2026stepshield}.
Our method achieves over $\mathbf{90\%}$ trajectory-level weighted accuracy (wACC) on both benchmarks and outperforms the strongest auditor baseline by up to $\mathbf{11.11}\%$ wACC, while maintaining a practical accuracy--timeliness trade-off.
These gains extend to Llama backbones, and loss-guided variants improve outcome discrimination over training without perturbations or with random perturbations.
Our main contributions can be summarized as follows:




\begin{itemize}
    \item 
    We show that augmenting the training corpus with edited prefixes does not improve held-out prediction under a fixed training budget. This motivates constructing hard latent variants tailored to the current auditor (Section~\ref{sec:prefix-auditor}).

    \item We develop \textbf{\textit{Dream2Discern}}, a parameter-efficient runtime auditor with alternating \emph{Dream} and \emph{Discern} updates.
    \emph{Dream} constructs hard latent variants through loss-guided perturbations of the auditor's representation computation, and \emph{Discern} learns to predict execution outcomes from these variants using the recorded labels (Section~\ref{sec:steer-to-confuse} and Section~\ref{sec:train-to-separate}).

    \item We demonstrate improved prediction performance on \textsc{AFTraj-2K} and StepShield across different model families.
    Ablations and further analyses support the effectiveness of loss-guided variant construction and its contribution to runtime auditor learning (Section~\ref{sec:exp}).
\end{itemize}

\section{Problem Formulation}
\label{sec:problem}


\noindent \textbf{Multi-Agent Trajectory.}
Consider a system of LLM agents that collaborate on a task through communication and tool use. For execution $i$, let $c_i$ denote the initial task context available to the auditor. We write the complete trajectory as $\tau_i=(c_i,u_{i,1},\ldots,u_{i,T_i})$,
where $T_i$ is the number of interaction steps. Each record $u_{i,t}$ identifies its source (agent, user, tool, environment) and records the associated message, action, or observation.  
Each completed trajectory has a terminal outcome $Y_i\in\{0,1\}$, where $Y_i=1$ denotes failure and $Y_i=0$ denotes success.


\noindent \textbf{Runtime Failure Prediction.}
Following prior work on runtime monitoring~\citep{zhang2026agentforesight,li2026traces,park2026hide}, we consider an auditor that observes only the prefix $\tau_{i,\leq t}=(c_i,u_{i,1},\ldots,u_{i,t})$ at each nonterminal step $1\leq t<T_i$.
The auditor predicts the terminal outcome $Y_i$ from this prefix, with its prediction denoted by $\widehat{Y}_{i,t}\in\{0,1\}$.
As illustrated in Figure~\ref{fig:scts-motivation}(a), the auditor updates its prediction after each newly observed interaction.
Future interactions, the realized terminal length $T_i$, and the outcome $Y_i$ are unavailable to the auditor at prediction time.

\noindent \textbf{Learning Setup.}
Let $f_\Theta$ denote an auditor with trainable parameters $\Theta$ and a sigmoid output. Given an input $X_{i,t}$ constructed from the observed prefix $\tau_{i,\leq t}$, it produces a scalar failure score
\begin{equation}
    s_{i,t}=f_\Theta(X_{i,t})\in[0,1].
    \label{eq:auditor-parameterization}
\end{equation}
Thresholding this score yields $\widehat{Y}_{i,t}$, with the decision rule specified in Section~\ref{sec:runtime-decision}.
For the learning objective, let $q=(i,t)$ index a recorded prefix with input $X_q=X_{i,t}$ and the corresponding trajectory's terminal outcome $Y_q=Y_i$ as its prediction target. 
For an indexed batch $\mathcal B=(q_b)_{b=1}^{n}$ of selected prefixes, we define the mean BCE loss as $\mathcal L_{\mathcal B}(\Theta)=\frac{1}{n}\sum_{b=1}^{n} \ell_{\mathrm{BCE}}\!\left(f_\Theta(X_{q_b}),Y_{q_b}\right)$.

\section{Method}
\label{sec:method}

We present \textbf{\textit{Dream2Discern}}, a runtime auditor trained through alternating \emph{Dream} and \emph{Discern} steps.
We first analyze the limitations of trajectory augmentation and motivate prediction loss as a criterion for constructing hard latent variants. 
We then describe \emph{Dream}, which constructs hard latent variants by perturbing the auditor's adapted representation path. 
The \emph{Discern} step trains the auditor under the selected perturbation. 
Finally, we describe how prefix failure scores yield runtime decisions. 

\subsection{Motivation}
\label{sec:prefix-auditor}


Reliable deployment of LLM-based multi-agent systems calls for runtime failure prediction to support timely intervention.
Existing learning-based auditors typically learn from a limited corpus of carefully curated trajectories with verified outcomes.
To improve prediction, a straightforward approach is to augment this corpus to cover a broader range of failure scenarios~\citep{xu2025agenttrek,kong2026aegis}.
However, the practical value of this approach depends on both the cost of acquiring additional labeled trajectories and whether they improve held-out prediction.

To this end, we first assess the cost of acquiring additional labeled trajectories.
We measure the combined rollout and outcome-verification time for $\sim$2K trajectories across task domains in \textsc{AFTraj-2K}~\citep{zhang2026agentforesight} (see the right panel of Figure~\ref{fig:scts-motivation}(b)).
Results show that averaged acquisition times reach tens of seconds per trajectory, making repeated corpus expansion time-consuming and unsustainable.
Editing offers an alternative by reusing recorded history, but the resulting continuation may no longer share the original outcome label.
To examine this issue, we modify an action or tool response and resume execution, obtaining 326 verified continuations in \textsc{AFTraj-2K} (see Appendix~\ref{app:trajectory-editing} for detailed settings).
Among verified continuations from failed source trajectories, 85.7\% succeed, showing that source labels cannot simply be reused after editing and re-execution.

Beyond acquisition cost, we examine whether the verified edits improve auditor learning.
We first compare paired original and edited prefix representations extracted by the same frozen LLM encoder.
Despite their textual differences, the two groups substantially overlap in a shared PCA projection (Figure~\ref{fig:augmentation-diagnostics}(a)).
To evaluate their training utility, we train prediction heads on original prefixes alone (\emph{Original}) or together with the verified edits (\emph{+Edited}), keeping the encoder and optimization budget fixed.
We evaluate both heads on the same held-out prefixes using class-balanced BCE, where lower loss indicates better prediction.
The higher loss of \emph{+Edited} relative to \emph{Original} (Figure~\ref{fig:augmentation-diagnostics}(b)) shows that these edits do not improve held-out prediction under this budget.

\begin{wrapfigure}{r}{0.65\textwidth}
    \centering
    \begin{minipage}{0.48\linewidth}
        \centering
        \includegraphics[width=\linewidth]{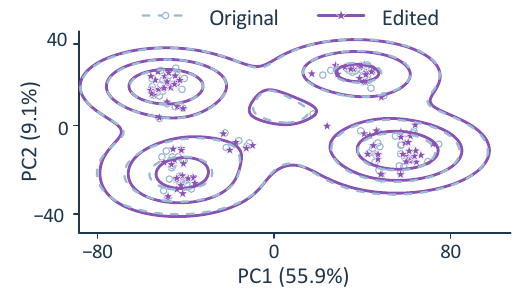}
        {\small\textbf{(a) Representation overlap}}
    \end{minipage}
    \begin{minipage}{0.48\linewidth}
        \centering
        \includegraphics[width=\linewidth]{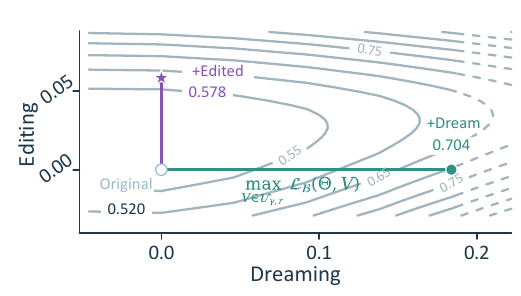}
        {\small\textbf{(b) Held-out prediction loss}}
    \end{minipage}
    \caption{\textbf{(a)} Original and edited prefixes overlap in a shared PCA projection.
    \textbf{(b)} Training with edits does not reduce held-out class-balanced BCE. A temporary \textit{Dream} perturbation selected on training prefixes raises this loss with the \emph{Original} head fixed.}
    \label{fig:augmentation-diagnostics}
\vspace{-4mm}
\end{wrapfigure}


These observations motivate us to construct hard latent variants that target the auditor's prediction difficulties.
We propose to seek bounded weight perturbations that alter prefix representations toward higher batch prediction loss, while preserving the recorded prefixes and outcome labels.
To examine this intuition, we select a perturbation using only training prefixes and temporarily apply it to the encoder, keeping the \emph{Original} prediction head fixed.
On the same held-out prefixes, the perturbed auditor (\emph{+Dream}) has higher BCE than \emph{Original} (Figure~\ref{fig:augmentation-diagnostics}(b)).
This result shows that the loss-guided perturbation also makes outcome prediction harder beyond the batch used to select it, without acquiring additional labeled trajectories.
Experimental setup details are in Appendix~\ref{app:augmentation-diagnostics}.

\subsection{Dream: Constructing Hard Latent Variants}
\label{sec:steer-to-confuse}

We instantiate $f_\Theta$ with a pretrained LLM (e.g., Qwen3-8B) whose weights $\phi_0$ remain frozen.
We apply LoRA~\citep{hu2021lora} to the query and value projections in Transformer blocks following a fixed intermediate block, and denote the adapter parameters by $a$.
Let $h_{\phi_0,a}(X)$ denote the final-layer hidden state at the \textit{last token} of prefix input $X$.
A classification head $g_\psi$ maps this state to a scalar logit, yielding $f_\Theta(X)=\sigma(g_\psi(h_{\phi_0,a}(X)))$, where $\sigma$ is the sigmoid function and $\Theta=(a,\psi)$.


Each training batch $\mathcal B=(q_b)_{b=1}^{2m}$ contains $m$ prefixes from each outcome class, so $n=2m$ in the batch BCE loss (Figure~\ref{fig:scts-overview}, \ding{172}).
In the \textit{Dream} step, we hold $\Theta$ fixed and seek a shared perturbation that increases prediction loss on this batch.
The perturbation acts only on the effective weights of the adapted query and value projections, so the search changes prefix representations under the fixed prediction head.
Let $\mathcal S$ index these projections and $V=(V_j)_{j\in\mathcal S}$ denote the perturbation shared across the batch.
The effective projection weights and their temporary perturbations are
\begin{equation}
    W_j(a)=W_j^0+\kappa_L B_jA_j,\qquad
    W_j(a)\;\mapsto\;W_j(a)+V_j,\qquad j\in\mathcal S,
    \label{eq:lora}
\end{equation}
where $W_j^0$ is frozen, $A_j$ and $B_j$ are the trainable LoRA factors collected in $a$, and $\kappa_L$ is the adapter scaling.
Since $B_jA_j=(cB_j)(A_j/c)$ for any $c\neq0$, an operator-level perturbation budget is invariant to this rescaling of the LoRA factors. We write $\mathcal L_{\mathcal B}(\Theta,V)$ for the batch loss in Section~\ref{sec:problem} computed with these perturbed operators, with $\mathcal L_{\mathcal B}(\Theta,0)=\mathcal L_{\mathcal B}(\Theta)$. Throughout the \textit{Dream} step, the recorded prefix inputs and their outcome labels remain unchanged.


\noindent \textbf{Bounded, low-rank perturbations.}
To identify prediction difficulties under small, structured changes to the auditor's operators, we bound the relative perturbation magnitudes and the rank at each projection.
Using frozen projection norms $\omega_j:=\lVert W_j^0\rVert_F>0$ as reference scales, we define:
\begin{equation}
    \lVert V\rVert_{\omega}^{2}
    :=\sum_{j\in\mathcal S}\frac{\lVert V_j\rVert_F^2}{\omega_j^2}, \quad
    \mathcal U_{\gamma,r}
    :=\big\{V:\ \lVert V\rVert_{\omega}\leq\gamma,\quad
    \operatorname{rank}(V_j)\leq r\ \ \forall j\in\mathcal S\big\},
    \label{eq:ambiguity-set}
\end{equation}
where $\gamma>0$ is a global relative radius and $r\geq1$ is an integer rank cap. 
The scales normalize perturbation magnitudes across projections, while the single budget bounds their combined relative size. We keep these scales fixed so that adapter updates do not change the reference used to measure perturbation size.
The rank cap limits each projection's direct output displacement up to $r$ directions.

\begin{figure}[t]
    \centering
    \includegraphics[width=0.99\linewidth]{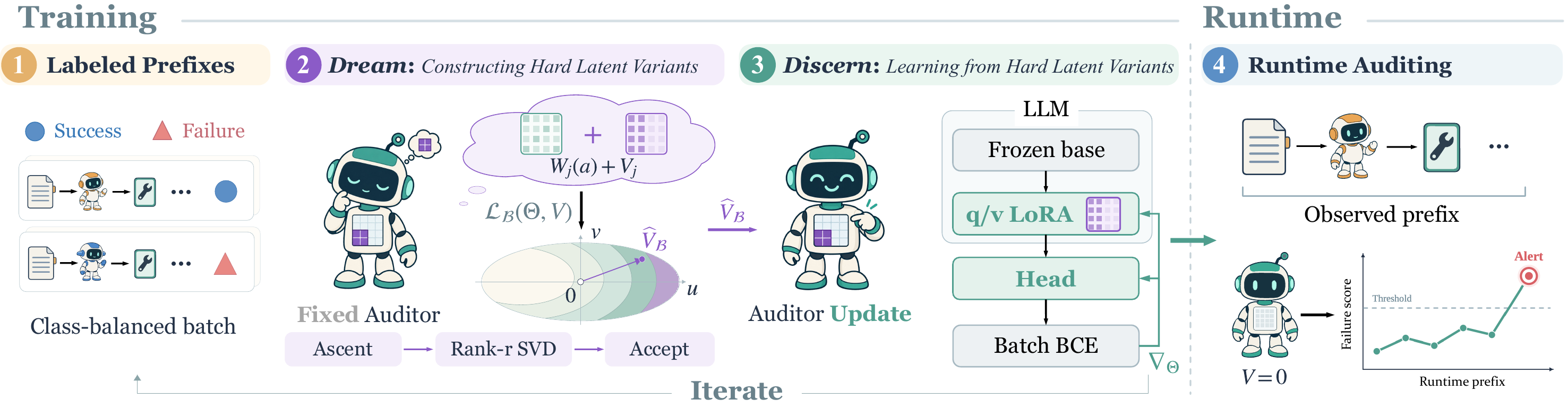}
    \caption{\textbf{Overview of \method{}.}
    \textbf{\ding{172} \textit{Labeled prefixes}.}
    We form a class-balanced batch from recorded success and failure prefixes, keeping their inputs and outcome labels unchanged throughout the training steps.
    \textbf{\ding{173} \textit{Dream}.}
    With the auditor fixed, we search for shared, bounded, low-rank perturbations to its query and value projection weights that increase batch prediction loss.
    \textbf{\ding{174} \textit{Discern}.}
    Holding the selected perturbation fixed, we update the LoRA adapters and prediction head to minimize the loss on the resulting hard latent variants.
    \textbf{\ding{175} \textit{Runtime auditing}.}
    We discard the perturbation and apply a calibrated threshold to each prefix score to predict failure during execution.}
    \label{fig:scts-overview}
\end{figure}

\noindent \textbf{From operator perturbations to hard latent variants.}
Within the perturbation set above, we define the ideal \textit{Dream} perturbation by maximizing batch prediction loss:
\begin{equation}
    V_{\mathcal B}^{\star}(\Theta)
    \in\arg\max_{V\in\mathcal U_{\gamma,r}}
      \mathcal L_{\mathcal B}(\Theta,V).
    \label{eq:ideal-inner}
\end{equation}
For an individual prefix, BCE increases when the predicted failure score rises for a successful trajectory or falls for a failed one.
The objective gives both outcome classes equal weight and evaluates a single perturbation on the batch.
Under the fixed prediction head, \textit{Dream} seeks latent variants of the recorded prefixes with higher batch prediction loss.
Although the operator perturbation is shared, its effect depends on the incoming token states of each prefix~\citep{wang2023out}.
For a fixed $V$, let $H_{q,j}^{V}$ collect the token states entering projection $j$ for prefix $q$ in the perturbed forward pass.
With the compact SVD $V_j=P_j\Sigma_jQ_j^\top$, the direct additive contribution of $V_j$ at these incoming states is
\begin{equation}
    H_{q,j}^{V}V_j^{\top}
    =(H_{q,j}^{V}Q_j)\,\Sigma_j\,P_j^{\top}.
    \label{eq:implicit-transformation}
\end{equation}
The columns of $P_j$ define at most $r$ output directions shared across prefixes, while $H_{q,j}^{V}Q_j\Sigma_j$ determines their prefix-dependent coefficients.
Query perturbations can alter attention weights over prefix tokens, while value perturbations can alter the vectors aggregated by attention.
These local changes propagate through the remaining Transformer computation to the final prefix representations.
The final prefix representations induced by an accepted \textit{Dream} perturbation form \emph{hard latent variants}, with hardness measured by the increase in batch prediction loss.
\textit{Discern} then learns to predict the recorded outcomes from these variants.
We further observe in Section~\ref{exp:mecha} that held-out prefixes with larger absolute score changes under training-selected \textit{Dream} perturbations are more often misclassified by the unperturbed auditor.

\noindent \textbf{Solving for the perturbation.}
To construct hard latent variants, we seek operator changes within $\mathcal U_{\gamma,r}$ that maximize the first-order increase in batch prediction loss.
Hence, we linearize the objective in Equation~\ref{eq:ideal-inner} at the unperturbed auditor with $V=0$.
Let $G_j=\left.\nabla_{V_j}\mathcal L_{\mathcal B}(\Theta,V)\right|_{V=0}$ denote the loss gradient for projection $j$.
The first-order loss increase is $\sum_{j\in\mathcal S}\langle G_j,V_j\rangle$, where $\langle A,B\rangle=\operatorname{tr}(A^\top B)$ is the Frobenius inner product.
In the scaled coordinates of Equation~\ref{eq:ambiguity-set}, the gradient is $C_j=\omega_jG_j$.
Let $C_{j,r}$ be a best approximation of $C_j$ of rank at most $r$ in Frobenius norm, obtained by SVD truncation.
Define $g_r=(\sum_{j\in\mathcal S}\lVert C_{j,r}\rVert_F^2)^{1/2}$ as the norm of the retained scaled gradients.
\begin{preprintproposition}
\begin{proposition}[Optimal linearized perturbation]
\label{prop:linear-oracle}
For fixed $\Theta$ and $\mathcal B$, the maximum of $\sum_{j\in\mathcal S}\langle G_j,V_j\rangle$ over $\mathcal U_{\gamma,r}$ is $\gamma g_r$.
If $g_r>0$, one maximizer is
\begin{equation}
    V^{\mathrm{lin}}_j=\gamma\,\omega_j\,\frac{C_{j,r}}{g_r},\qquad j\in\mathcal S.
    \label{eq:linear-oracle}
\end{equation}
If $g_r=0$, the maximum is attained at $V=0$.
\end{proposition}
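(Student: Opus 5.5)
The plan is to change variables to the scaled coordinates, which turns the problem into maximizing a linear functional over a product of rank-constrained matrices under a single Euclidean (Frobenius) budget. Set $U_j=V_j/\omega_j$. Then $\lVert V\rVert_\omega^2=\sum_j\lVert U_j\rVert_F^2$, and $\operatorname{rank}(U_j)=\operatorname{rank}(V_j)$ because $\omega_j>0$. The objective becomes $\sum_j\langle G_j,\omega_jU_j\rangle=\sum_j\langle C_j,U_j\rangle$. So it suffices to show that
\[
\max\Big\{\textstyle\sum_j\langle C_j,U_j\rangle:\ \sum_j\lVert U_j\rVert_F^2\le\gamma^2,\ \operatorname{rank}(U_j)\le r\Big\}=\gamma g_r,
\]
with maximizer $U_j=\gamma C_{j,r}/g_r$.

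\textbf{Per-block bound.} The key lemma is that for any $U$ with $\operatorname{rank}(U)\le r$,
\[
\langle C,U\rangle\le\lVert C_r\rVert_F\,\lVert U\rVert_F .
\]
I would prove it via von Neumann's trace inequality. It gives $\langle C,U\rangle\le\sum_k\sigma_k(C)\sigma_k(U)$, and only the first $r$ terms can be nonzero since $U$ has at most $r$ nonzero singular values. Cauchy--Schwarz on those $r$ terms then gives $\big(\sum_{k\le r}\sigma_k(C)^2\big)^{1/2}\lVert U\rVert_F=\lVert C_r\rVert_F\lVert U\rVert_F$, by Eckart--Young's characterization of the truncated SVD.

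An alternative route avoids von Neumann. Write $U=PP^\top U$, where $P$ is an orthonormal basis of the column space of $U$ of dimension at most $r$. Then $\langle C,U\rangle=\langle P^\top C,P^\top U\rangle\le\lVert P^\top C\rVert_F\lVert U\rVert_F$, and $\lVert P^\top C\rVert_F\le\lVert C_r\rVert_F$ because $PP^\top C$ has rank at most $r$ and the Frobenius norm of a projection is maximized by the top singular subspace (Ky Fan).

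\textbf{Aggregation.} Summing the per-block bound and applying Cauchy--Schwarz across $j$ gives
\[
\sum_j\langle C_j,U_j\rangle\le\sum_j\lVert C_{j,r}\rVert_F\lVert U_j\rVert_F\le g_r\Big(\sum_j\lVert U_j\rVert_F^2\Big)^{1/2}\le\gamma g_r .
\]

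\textbf{Attainment.} If $g_r>0$, take $U_j=\gamma C_{j,r}/g_r$. Each $U_j$ has rank at most $r$, and $\sum_j\lVert U_j\rVert_F^2=\gamma^2$, so $U$ is feasible. Also $\langle C_j,C_{j,r}\rangle=\lVert C_{j,r}\rVert_F^2$, since $C_j-C_{j,r}$ is orthogonal to $C_{j,r}$ by the SVD structure. Hence the objective equals $\gamma\sum_j\lVert C_{j,r}\rVert_F^2/g_r=\gamma g_r$. Translating back via $V_j=\omega_jU_j$ gives the stated $V^{\mathrm{lin}}_j$.

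If $g_r=0$, then every $C_{j,r}=0$. This forces $C_j=0$, since the top singular value is retained because $r\ge1$, and therefore $G_j=0$. The objective is then identically zero and $V=0$ attains the maximum.

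\textbf{Main obstacle.} The only nontrivial step is the rank-constrained per-block bound. I would present the projection argument, citing Eckart--Young/Ky Fan for $\lVert PP^\top C\rVert_F\le\lVert C_r\rVert_F$. The remaining care point is that "a best approximation" $C_{j,r}$ may be non-unique when singular values tie. This does not matter, because any choice has the same norm and satisfies the orthogonality identity.
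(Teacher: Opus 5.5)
Your proposal is correct and follows the paper's proof essentially step for step: the rescaling $U_j=V_j/\omega_j$, von Neumann's trace inequality plus Cauchy--Schwarz for each block, Cauchy--Schwarz across blocks, and attainment at $U_j=\gamma C_{j,r}/g_r$ using $\langle C_j,C_{j,r}\rangle=\lVert C_{j,r}\rVert_F^2$. Your projection alternative for the per-block bound is also valid, since it only replaces von Neumann with the Ky Fan maximum principle (or Eckart--Young plus Pythagoras).
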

\end{preprintproposition}
The proof is given in Appendix~\ref{app:proof-linear-oracle}.
The solution retains the leading singular components of each scaled gradient and allocates relative perturbation magnitudes in proportion to their retained Frobenius norms.
In practice, we normalize the full scaled gradient before SVD truncation (Figure~\ref{fig:scts-overview}, \ding{173}).
For $g_r>0$, the resulting candidate $\overline V_{\mathcal B}$ follows one such maximizing direction, with relative radius at most $\gamma$, while it is zero for $g_r=0$.
To check whether the candidate increases the nonlinear batch loss, we start at $\overline V_{\mathcal B}$ and successively halve it, accepting the first tested point whose measured loss strictly exceeds the loss at $V=0$.
We test at most six points, as specified in Algorithm~\ref{alg:dream2discern} of Appendix~\ref{app:solver}.
If no tested step is accepted, we return $V=0$.
Let $\lambda_{\mathcal B}\in\{0,1,2^{-1},\ldots,2^{-5}\}$ denote the returned step fraction, with $0$ indicating rejection. The returned perturbation and realized radius:
\begin{equation}
    \widehat V_{\mathcal B}=\lambda_{\mathcal B}\overline V_{\mathcal B},
    \qquad
    \rho_{\mathcal B}:=\lVert\widehat V_{\mathcal B}\rVert_{\omega}.
    \label{eq:accepted-inner}
\end{equation}
Each batch starts a fresh search from $V=0$ without carrying perturbations between batches.

\subsection{Discern: Learning from Hard Latent Variants}
\label{sec:train-to-separate}

The \textit{Discern} step trains the auditor on the prefix representations induced by the perturbation $\widehat V_{\mathcal B}$ returned by the \textit{Dream} step as depicted in Figure~\ref{fig:scts-overview}, \ding{174}.
Each prefix retains its recorded trajectory outcome $Y_q$ as the prediction target, since the \textit{Dream} step changes the auditor's representation computation while keeping the input $X_q$ unchanged.
To learn from these representation changes, we hold $\widehat V_{\mathcal B}$ fixed and take an optimization step using only the perturbed batch loss:
\begin{equation}
    \min_{\Theta}\;
    \mathcal L_{\mathcal B}\!\left(
        \Theta,\operatorname{sg}[\widehat V_{\mathcal B}]
    \right),
    \label{eq:outer}
\end{equation}
where $\operatorname{sg}$ stops gradients through the \textit{Dream} search.
Gradients of this loss pass through the perturbed forward pass to update the adapters $a$ and classification head $\psi$, while the pretrained weights $\phi_0$ remain frozen.
If the \textit{Dream} step returns zero, Equation~\ref{eq:outer} uses the unperturbed batch loss $\mathcal L_{\mathcal B}(\Theta)$.

\noindent \textbf{Interpreting the \textit{Discern} loss.}
To explain what hard latent variants add to training on the recorded prefixes, we relate the loss used by \textit{Discern} to the unperturbed prediction loss.
The following result connects their difference to the current auditor's sensitivity to the permitted low-rank operator changes and the perturbation radius returned by \textit{Dream}.
\begin{preprintproposition}
\begin{proposition}[Loss at the returned perturbation]
\label{prop:accepted-challenge}
Fix $\Theta$ and $\mathcal B$, and consider the \textit{Dream} step in Algorithm~\ref{alg:dream2discern} in exact arithmetic.
Assume that, for some $\beta\geq0$, the first-order Taylor remainder of $\mathcal L_{\mathcal B}(\Theta,V)$ at $V=0$ has magnitude at most $\frac{\beta}{2}\lVert V\rVert_\omega^2$ for every $V$ on the segment from $0$ to $\overline V_{\mathcal B}$.
Then
\begin{equation}
    \left|\mathcal L_{\mathcal B}(\Theta,\widehat V_{\mathcal B})
    -\mathcal L_{\mathcal B}(\Theta,0)-\rho_{\mathcal B}g_r\right|
    \leq \frac{\beta}{2}\rho_{\mathcal B}^2.
    \label{eq:first-order-hard-loss}
\end{equation}
\end{proposition}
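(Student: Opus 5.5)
The plan is to write the returned perturbation as a point on the segment from $0$ to $\overline V_{\mathcal B}$, apply the assumed remainder bound there, and compute the first-order term exactly.

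\textbf{Degenerate case.} If $\lambda_{\mathcal B}=0$, or if $g_r=0$ so that $\overline V_{\mathcal B}=0$, then $\widehat V_{\mathcal B}=0$ and $\rho_{\mathcal B}=0$. Both sides of Equation~\ref{eq:first-order-hard-loss} vanish.

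\textbf{Main case.} Otherwise $g_r>0$ and $\lambda_{\mathcal B}\in\{1,2^{-1},\ldots,2^{-5}\}$. Then $\widehat V_{\mathcal B}=\lambda_{\mathcal B}\overline V_{\mathcal B}$ lies on the segment, so the remainder hypothesis applies at $\widehat V_{\mathcal B}$. It gives
\[
\Bigl|\mathcal L_{\mathcal B}(\Theta,\widehat V_{\mathcal B})-\mathcal L_{\mathcal B}(\Theta,0)-\textstyle\sum_{j\in\mathcal S}\langle G_j,\widehat V_{j}\rangle\Bigr|\le\tfrac{\beta}{2}\lVert\widehat V_{\mathcal B}\rVert_\omega^2=\tfrac{\beta}{2}\rho_{\mathcal B}^2 .
\]
It therefore suffices to show $\sum_j\langle G_j,\widehat V_j\rangle=\rho_{\mathcal B}g_r$, i.e.\ that the linear term equals the realized radius times $g_r$.

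\textbf{Linear term.} Since $\widehat V_{\mathcal B}$ is a nonnegative multiple of $\overline V_{\mathcal B}$, both sides of this identity are homogeneous of degree one in the step. It is thus enough to show $\sum_j\langle G_j,\overline V_j\rangle=g_r\lVert\overline V_{\mathcal B}\rVert_\omega$. Take first the candidate in the form of Proposition~\ref{prop:linear-oracle}, $\overline V_j=c\,\omega_j C_{j,r}$ with $c>0$. Then
\[
\langle G_j,\overline V_j\rangle=c\,\langle C_j,C_{j,r}\rangle=c\lVert C_{j,r}\rVert_F^2 ,
\]
where the first equality uses $C_j=\omega_jG_j$. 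The second uses that the truncated SVD is an orthogonal projection in Frobenius inner product: $C_j-C_{j,r}$ has singular components orthogonal to those kept. Summing over $j$ gives $c\,g_r^2$. On the other side, $\lVert\overline V_{\mathcal B}\rVert_\omega^2=\sum_j c^2\lVert C_{j,r}\rVert_F^2=c^2g_r^2$, so $\lVert\overline V_{\mathcal B}\rVert_\omega=c\,g_r$. Hence $\sum_j\langle G_j,\overline V_j\rangle=g_r\lVert\overline V_{\mathcal B}\rVert_\omega$, as required.

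\textbf{Main obstacle.} The delicate step is matching this to the practical implementation, which normalizes the full scaled gradient before truncating. That rescaling only changes the positive constant $c$ (globally, or per block). If it is per block, the identity must be rechecked, and I would appeal to the statement that $\overline V_{\mathcal B}$ ``follows one such maximizing direction'' of Proposition~\ref{prop:linear-oracle}. For any maximizer direction $D$ with $\lVert D\rVert_\omega=1$, that proposition gives $\sum_j\langle G_j,D_j\rangle=g_r$. Homogeneity then yields $\sum_j\langle G_j,tD_j\rangle=t\,g_r=g_r\lVert tD\rVert_\omega$ for all $t\ge0$. This is the cleanest route: it covers any candidate along a maximizing ray without depending on the normalization details.
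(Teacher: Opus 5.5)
Your argument is correct and is essentially the paper's proof. You write the returned point as a common nonnegative multiple of $(\omega_j C_{j,r})_{j\in\mathcal S}$, use $\langle C_j,C_{j,r}\rangle=\lVert C_{j,r}\rVert_F^2$ to get the linear term $\rho_{\mathcal B}g_r$, and apply the remainder bound on the segment. Your homogeneity reduction to $\overline V_{\mathcal B}$ just repackages the paper's direct formula $\widehat V_{\mathcal B,j}=\rho_{\mathcal B}\,\omega_j C_{j,r}/g_r$.

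The one loose end is your ``main obstacle''. Close it by reading Algorithm~\ref{alg:dream2discern}, not by appealing to the informal remark that $\overline V_{\mathcal B}$ ``follows a maximizing direction''. The denominator $d$ is a single global scalar, and $\mathcal P_r$ commutes with multiplication by the positive scalar $\gamma\omega_j/d$. The radial rescaling and the step fraction are also global factors. Hence $\overline V_{\mathcal B,j}=c\,\omega_jC_{j,r}$ with one $c\ge 0$ for all $j$, where $C_{j,r}$ denotes the components the algorithm actually retains; this choice leaves $g_r$ unchanged when singular values tie.

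The check matters because the per-block alternative you left open would break the result. With block-dependent factors $c_j$, Cauchy--Schwarz gives $\sum_j c_j\lVert C_{j,r}\rVert_F^2<g_r\bigl(\sum_j c_j^2\lVert C_{j,r}\rVert_F^2\bigr)^{1/2}$ unless the $c_j$ agree on the blocks with $C_{j,r}\neq 0$. The identity would then fail. Your fallback also presupposes that $\overline V_{\mathcal B}$ lies on a maximizing ray, which is equivalent to the identity you are proving, so it would be circular.
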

\end{preprintproposition}
To first order, the loss evaluated by \textit{Discern} is the unperturbed prediction loss plus $\rho_{\mathcal B}g_r$.
By Proposition~\ref{prop:linear-oracle}, $g_r$ is the largest first-order loss increase per unit relative radius under the rank constraints.
The additional loss therefore reflects the current auditor's sensitivity to these operator changes, with $\rho_{\mathcal B}$ accounting for the step actually returned by \textit{Dream}.
The remainder bounds the nonlinear deviation from this local interpretation.
The proof is given in Appendix~\ref{app:proof-accepted-challenge}, and implementation details are in Appendix~\ref{app:solver}. Repeating \textit{Dream} and \textit{Discern} trains $f_\Theta$ to predict outcomes from variants selected using its current loss.
At runtime, the resulting auditor predicts the outcome with $V=0$.

\subsection{Runtime Auditing}
\label{sec:runtime-decision}

At each nonterminal interaction step, the trained auditor assigns a failure score to the observed prefix and predicts the terminal outcome (Figure~\ref{fig:scts-overview}, \ding{175}):
\begin{equation}
    s_{i,t}=f_\Theta(X_{i,t}),\qquad
    \widehat Y_{i,t}=\mathbf 1\{s_{i,t}\ge\zeta_t\},\qquad 1\le t<T_i.
    \label{eq:runtime-decision}
\end{equation}
Each prediction requires one auditor forward pass and a scalar threshold comparison.
To account for variation in score distributions across interaction steps, we fit step-dependent thresholds from successful trajectories~\citep{gu2026safe}.
We first express each calibration score as its empirical percentile at the corresponding step. We then summarize each trajectory by its maximum percentile over nonterminal steps, an aggregation closely related to the max-rank procedure~\citep{timans2023max}.
A high quantile of these maxima defines a shared rank cutoff which is mapped back to the score scale.
The thresholds are fitted offline using successful trajectories from the training partition and remain fixed at runtime. Appendix~\ref{app:threshold} gives the complete construction and calibration protocol.

\section{Experiments}
\label{sec:exp}
\vspace{-1mm}

\subsection{Experimental Setups}
\label{sec:exp-setup}

\noindent \textbf{Benchmarks.}
We evaluate runtime auditing on \textsc{AFTraj-2K}~\citep{zhang2026agentforesight} and StepShield~\citep{felicia2026stepshield}.
\textsc{AFTraj-2K} contains multi-agent executions spanning mathematical reasoning, coding, and tool use, while StepShield provides annotated agent trajectories for safety monitoring.
We train \method{} separately for each benchmark and evaluate it on the corresponding test partition.
The test split of \textsc{AFTraj-2K} comes from the same corpus as the training split and provides in-domain evaluation, whereas the test trajectories of StepShield come from incident templates never used in training and provide out-of-domain evaluation at the template level.


\noindent \textbf{Baselines.}
We compare \method{} with four categories of baselines.
\textbf{(1) Open-weight LLM judges} include Qwen3-4B/8B/32B~\citep{yang2025qwen3}, Llama-3.2-3B-Instruct, Llama-3.1-8B-Instruct~\citep{grattafiori2024llama}, and LlamaGuard3-8B~\citep{inan2023llama}.
\textbf{(2) Closed-source LLM judges} include GPT-5.4, Claude-Sonnet-4.6, Grok-4.3, and Kimi-K2.7-Code.
\textbf{(3) Token uncertainty baselines} use maximum or average token entropy~\citep{malinin2020uncertainty} and negative log-likelihood~\citep{ren2022out} under a Qwen3-8B proxy model to score the recorded prefixes.
\textbf{(4) Failure auditors} include AgentDebug~\citep{zhu2025llm}, AgentForesight~\citep{zhang2026agentforesight}, SAFE~\citep{gu2026safe}, Hide-and-Seek~\citep{park2026hide}, and TRACES~\citep{li2026traces}.

\noindent \textbf{Evaluation metrics.}
We evaluate every nonterminal prefix and aggregate predictions at the trajectory level.
Using each trajectory's maximum prefix score, we report \textbf{AUROC} for outcome discrimination and \textbf{FPR95} for the false-positive rate at 95\% failure-detection recall.
\textbf{Weighted accuracy (wACC)} measures trajectory-level classification accuracy under the observed class proportions.
\textbf{Time-weighted accuracy (TWA)} rewards earlier detection of failed trajectories while accounting for missed failures and false alarms~\citep{romer2026failure}.
For wACC and TWA, we use fitted thresholds at nominal $\alpha=0.05$, except for proprietary judges whose returned labels define the decisions.

\noindent \textbf{Implementation details.}
We instantiate \method{} on Qwen3-4B, Qwen3-8B, Llama-3.2-3B-Instruct, and Llama-3.1-8B-Instruct.
We optimize the LoRA adapters and prediction head with AdamW.
A 100-step supervised warm start is followed by a 4-epoch training budget for \textit{Dream} and \textit{Discern}.
On Qwen3, we report means and standard deviations over three training seeds for \method{}, SAFE, Hide-and-Seek, and TRACES.
More details are provided in Appendix~\ref{app:exp-training}.

We provide detailed experimental setups in Appendix~\ref{app:setup}.

\subsection{Main Experiments}
\label{exp:main}


\begin{table}[t]
\centering
\small
\vspace{-2pt}
\caption{Main comparison on \textbf{\textsc{AFTraj-$2$K}}~\citep{zhang2026agentforesight} and \textbf{StepShield}~\citep{felicia2026stepshield}. The best and second-best results in each column are shown in \textbf{bold} and \underline{underline}.}
\vspace{-6pt}
\label{tab:main_results}
\setlength{\tabcolsep}{3pt}
\renewcommand{\arraystretch}{1.15}
\resizebox{\columnwidth}{!}{%
\begin{tabular}{@{} >{\centering\arraybackslash}p{8mm} l *{8}{c} @{}}
\toprule
& &
\multicolumn{4}{c}{\textbf{\textsc{AFTraj-$2$K}}} &
\multicolumn{4}{c}{\textbf{StepShield}} \\
\cmidrule(lr){3-6}\cmidrule(lr){7-10}

& \textbf{Method} &
\textbf{AUROC}$\uparrow$ & \textbf{FPR95}$\downarrow$ &
\textbf{wACC}$\uparrow$ & \textbf{TWA}$\uparrow$ &
\textbf{AUROC}$\uparrow$ & \textbf{FPR95}$\downarrow$ &
\textbf{wACC}$\uparrow$ & \textbf{TWA}$\uparrow$ \\
\midrule

\multirow{6}{*}{\rotatebox{90}{\textbf{Open.}}} & 
Qwen3-4B &
$69.63$ & $86.39$ & $55.12$ & $51.65$ &
$81.07$ & $82.41$ & $50.46$ & $49.12$ \\
& Qwen3-8B &
$67.18$ & $80.47$ & $56.93$ & $52.83$ &
$84.95$ & $88.89$ & $62.96$ & $53.08$ \\
& Qwen3-32B & $69.79$
& $82.25$ & $56.33$ & $52.70$ &
$90.70$ & $70.37$ & $58.80$ & $52.31$ \\
& Llama3.2-3B-it &
$59.60$ & $90.53$ & $54.22$ & $51.21$ &
$78.88$ & $86.11$ & $50.93$ & $50.28$ \\
& Llama3.1-8B-it &
$58.02$ & $94.08$ & $53.01$ & $51.49$ &
$83.49$ & $76.85$ & $50.00$ & $49.56$ \\
& LlamaGuard3-8B &
$57.45$ & $94.08$ & $54.22$ & $50.07$ &
$83.02$ & $86.11$ & $75.00$ & $56.05$ \\
\midrule

\multirow{4}{*}{\rotatebox{90}{\textbf{Closed.}}} &
GPT-5.4 &
$50.53$ & $99.41$ & $45.48$ & $37.53$ &
$88.30$ & $58.33$ & $65.74$ & $41.90$ \\
& Claude-Sonnet-4.6 &
$70.90$ & $100.00$ & $63.86$ & $54.83$ &
$\mathbf{95.58}$ & $\mathbf{38.89}$ & $87.04$ & $\underline{56.79}$ \\
& Grok-4.3 &
$71.57$ & $95.27$ & $65.96$ & $52.34$ &
$91.48$ & $58.33$ & $82.87$ & $54.31$ \\
& Kimi-K2.7-Code &
$73.57$ & $100.00$ & $68.67$ & $54.43$ &
$90.50$ & $54.07$ & $85.19$ & $54.38$ \\
\midrule

\multirow{4}{*}{\rotatebox{90}{\textbf{Token Unc.}}} &
Max Entropy-8B &
$53.58$	& $95.86$	& $49.10$	& $47.92$ &
$53.70$	& $90.74$	& $49.54$	& $48.87$ \\
& Avg. Entropy-8B &
$66.20$	& $90.53$	& $49.70$	& $47.26$ &
$57.54$	& $91.67$	& $50.46$	& $49.46$ \\
& Max NLL-8B &
$45.20$	& $97.63$	& $51.20$	& $48.27$ &
$51.58$	& $93.52$	& $50.00$	& $49.02$ \\
& Avg. NLL-8B &
$48.09$	& $99.41$	& $54.52$	& $50.48$ &
$57.58$	& $87.96$	& $50.46$	& $50.20$ \\
\midrule

& AgentDebug-8B &
$56.78$	& $87.57$	& $51.81$	& $48.73$ &
$75.77$	& $85.19$	& $58.33$	& $51.38$ \\
& AgentForesight-7B &
$83.75$ & $89.94$ & $77.11$ & $59.33$ &
$83.37$ & $77.78$ & $79.17$ & $56.13$ \\
& \cellcolor{baselinegray}SAFE-Qwen3-4B &
\cellcolor{baselinegray}$94.76^{\pm 0.39}$ & \cellcolor{baselinegray}$41.42^{\pm 3.13}$ &
\cellcolor{baselinegray}$84.74^{\pm 1.94}$ & \cellcolor{baselinegray}$68.83^{\pm 0.81}$ &
\cellcolor{baselinegray}$84.47^{\pm 0.72}$ & \cellcolor{baselinegray}$63.27^{\pm 6.17}$ &
\cellcolor{baselinegray}$66.20^{\pm 5.09}$ & \cellcolor{baselinegray}$52.40^{\pm 1.71}$ \\
& \cellcolor{baselinegray}SAFE-Qwen3-8B &
\cellcolor{baselinegray}$92.69^{\pm 0.81}$ & \cellcolor{baselinegray}$55.62^{\pm 11.34}$ &
\cellcolor{baselinegray}$85.74^{\pm 1.22}$ & \cellcolor{baselinegray}$66.57^{\pm 0.39}$ &
\cellcolor{baselinegray}$79.45^{\pm 3.01}$ & \cellcolor{baselinegray}$63.89^{\pm 10.43}$ &
\cellcolor{baselinegray}$61.73^{\pm 6.96}$ & \cellcolor{baselinegray}$50.67^{\pm 1.55}$ \\
& \cellcolor{baselinegray}H\&S-Qwen3-4B &
\cellcolor{baselinegray}$90.84^{\pm 1.26}$ & \cellcolor{baselinegray}$72.98^{\pm 11.96}$ &
\cellcolor{baselinegray}$88.76^{\pm 1.14}$ & \cellcolor{baselinegray}$70.27^{\pm 0.43}$ &
\cellcolor{baselinegray}$85.02^{\pm 0.96}$ & \cellcolor{baselinegray}$86.42^{\pm 5.66}$ &
\cellcolor{baselinegray}$70.99^{\pm 2.83}$ & \cellcolor{baselinegray}$54.74^{\pm 1.33}$ \\
& \cellcolor{baselinegray}H\&S-Qwen3-8B &
\cellcolor{baselinegray}$89.54^{\pm 0.57}$ & \cellcolor{baselinegray}$89.15^{\pm 6.73}$ &
\cellcolor{baselinegray}$87.45^{\pm 1.42}$ & \cellcolor{baselinegray}$69.01^{\pm 1.42}$ &
\cellcolor{baselinegray}$81.58^{\pm 2.56}$ & \cellcolor{baselinegray}$81.17^{\pm 7.54}$ &
\cellcolor{baselinegray}$69.75^{\pm 0.96}$ & \cellcolor{baselinegray}$53.82^{\pm 0.69}$ \\
& \cellcolor{baselinegray}TRACES-Qwen3-4B &
\cellcolor{baselinegray}$91.98^{\pm 1.18}$ & \cellcolor{baselinegray}$60.75^{\pm 16.25}$ &
\cellcolor{baselinegray}$88.35^{\pm 1.14}$ & \cellcolor{baselinegray}$69.90^{\pm 1.38}$ &
\cellcolor{baselinegray}$85.37^{\pm 2.72}$ & \cellcolor{baselinegray}$59.88^{\pm 7.98}$ &
\cellcolor{baselinegray}$64.66^{\pm 2.28}$ & \cellcolor{baselinegray}$52.45^{\pm 2.52}$ \\
& \cellcolor{baselinegray}TRACES-Qwen3-8B &
\cellcolor{baselinegray}$92.63^{\pm 0.64}$ & \cellcolor{baselinegray}$65.88^{\pm 12.32}$ &
\cellcolor{baselinegray}$87.05^{\pm 1.04}$ & \cellcolor{baselinegray}$69.91^{\pm 0.48}$ &
\cellcolor{baselinegray}$79.25^{\pm 1.86}$ & \cellcolor{baselinegray}$79.63^{\pm 12.04}$ &
\cellcolor{baselinegray}$70.83^{\pm 2.12}$ & \cellcolor{baselinegray}$48.52^{\pm 2.66}$ \\
& \cellcolor{sctshighlight}\textbf{Dream2Discern-Qwen3-4B} &
\cellcolor{sctshighlight}$\underline{95.45}^{\pm 0.16}$ & \cellcolor{sctshighlight}$\underline{38.86}^{\pm 7.81}$ &
\cellcolor{sctshighlight}$\underline{90.96}^{\pm 0.60}$ & \cellcolor{sctshighlight}$\underline{72.61}^{\pm 0.38}$ &
\cellcolor{sctshighlight}$93.62^{\pm 0.68}$ & \cellcolor{sctshighlight}$\underline{42.28}^{\pm 2.33}$ &
\cellcolor{sctshighlight}$\mathbf{90.28}^{\pm 0.46}$ & \cellcolor{sctshighlight}$\mathbf{57.67}^{\pm 0.35}$ \\
\multirow{-10}{*}{\rotatebox[origin=c]{90}{\textbf{Failure Auditors}}} &
\cellcolor{sctshighlight}\textbf{Dream2Discern-Qwen3-8B} &
\cellcolor{sctshighlight}$\mathbf{95.82}^{\pm 0.43}$ & \cellcolor{sctshighlight}$\mathbf{31.95}^{\pm 3.07}$ &
\cellcolor{sctshighlight}$\mathbf{91.47}^{\pm 0.17}$ & \cellcolor{sctshighlight}$\mathbf{73.45}^{\pm 0.71}$ &
\cellcolor{sctshighlight}$\underline{93.95}^{\pm 0.63}$ & \cellcolor{sctshighlight}$53.09^{\pm 8.40}$ &
\cellcolor{sctshighlight}$\underline{87.50}^{\pm 0.93}$ & \cellcolor{sctshighlight}$55.69^{\pm 1.10}$ \\
\bottomrule
\end{tabular}}
\vspace{-6pt}
\end{table}

\noindent \textbf{Comparison with baseline methods.}
Table~\ref{tab:main_results} shows that \method{} leads all four metrics on \textsc{AFTraj-2K} and wACC and TWA on StepShield.
LLM judges are more competitive on StepShield, while token uncertainty-based methods provide limited outcome discrimination on both benchmarks.
For both backbones, \method{} outperforms competitive failure auditor baselines SAFE, Hide-and-Seek, and TRACES on all four metrics averaged over three seeds.
Its wACC advantage is substantially larger on out-of-domain StepShield than on in-domain \textsc{AFTraj-2K}, supporting our strategy of learning from hard latent variants without collecting additional trajectories.
Our 4B auditor also exceeds Claude-Sonnet-4.6 in StepShield wACC and TWA, demonstrating effective runtime auditing with a compact backbone.
Together, these results demonstrate the effectiveness of \method{}, which combines loss-guided variant construction with outcome-supervised representation learning for runtime failure prediction.

\noindent \textbf{Generalization across model families.}
To assess whether our training strategy extends beyond the Qwen3 family, we further compare \method{} with SAFE, H\&S, and TRACES using the same Llama3 backbones.
Table~\ref{tab:cross-family} shows that \method{} achieves the highest wACC on both benchmarks at both model sizes and leads all three reported metrics on \textsc{AFTraj-2K}.
These results support the applicability of our training strategy across model families for runtime failure prediction.

\subsection{Ablation and Further Analysis}
\label{exp:abla}

\noindent \textbf{Ablation on the perturbation strategy.}
To assess the contribution of \textit{Dream}, we compare \method{} with the same auditor trained without perturbations or with random low-rank perturbations.
The unperturbed variant trains the same LoRA adapters and prediction head using the prediction loss.
The random variant retains the same perturbed operators, rank cap, and nominal radius, but replaces the loss-guided search with random sampling.
As shown in Table~\ref{tab:training-ablation-review}, training with \textit{Dream} consistently achieves the highest AUROC in all four settings.
It also improves wACC and TWA over random perturbation training in most settings.
These results support using prediction loss to construct hard latent variants that improve the auditor's discrimination between successful and failed executions.
Section~\ref{exp:mecha} further examines how the selected perturbations affect the auditor.

\noindent \textbf{Detection accuracy--timeliness trade-off.}
We next examine whether \method{} retains its predictive gains at operating points that favor earlier failure warnings, as a reliable runtime auditor should balance accuracy and promptness.
Based on Qwen3-8B, we vary the nominal false-alarm level $\alpha$ from $0.01$ to $1.00$ to obtain various decision thresholds for \method{}, SAFE, H\&S, and TRACES.
Figure~\ref{fig:further-analysis} plots wACC against mean normalized detection time over all failed trajectories, with undetected failures assigned a value of one.
On \textsc{AFTraj-2K}, \method{} achieves higher wACC when early detection is prioritized, showing that its predictive gains are compatible with timely failure warnings.
On StepShield, these gains extend across a broad range of operating points, supporting the effectiveness of \method{} under different accuracy-timeliness requirements.

\begin{table}[t]
\begin{minipage}[t]{0.485\textwidth}
\centering\small
\caption{Comparison on Llama backbones.}
\vspace{-5pt}
\label{tab:cross-family}
\setlength{\tabcolsep}{5pt}
\renewcommand{\arraystretch}{1.15}
\resizebox{\linewidth}{!}{%
\begin{tabular}{@{}lllccc@{}}
\toprule
\textbf{Benchmark} & \textbf{Base Model} & \textbf{Method} &
\textbf{AUROC}$\uparrow$ & \textbf{wACC}$\uparrow$ & \textbf{TWA}$\uparrow$ \\
\midrule

\multirow{8}{*}{\textbf{\textsc{AFTraj-2K}}}
  & \multirow{4}{*}{Llama3.2-3B-it}
    & SAFE   & $\underline{94.62}^{\pm 0.09}$ & $\underline{85.54}^{\pm 4.64}$ & $\underline{69.12}^{\pm 2.94}$ \\
  & & H\&S   & $91.46^{\pm 0.67}$ & $83.23^{\pm 4.52}$ & $61.98^{\pm 5.25}$ \\
  & & TRACES & $91.88^{\pm 0.57}$ & $\underline{85.54}^{\pm 4.44}$ & $66.73^{\pm 4.30}$ \\
  & & \cellcolor{sctshighlight}\textbf{\method{}}
    & \cellcolor{sctshighlight}$\mathbf{95.62}^{\pm 0.61}$
    & \cellcolor{sctshighlight}$\mathbf{91.27}^{\pm 0.30}$
    & \cellcolor{sctshighlight}$\mathbf{73.38}^{\pm 0.18}$ \\
\cmidrule(l){2-6}
  & \multirow{4}{*}{Llama3.1-8B-it}
    & SAFE   & $\underline{94.64}^{\pm 0.37}$ & $86.75^{\pm 2.17}$ & $68.89^{\pm 1.65}$ \\
  & & H\&S   & $88.55^{\pm 0.93}$ & $71.29^{\pm 7.58}$ & $56.96^{\pm 4.66}$ \\
  & & TRACES & $92.64^{\pm 1.30}$ & $\underline{88.15}^{\pm 2.26}$ & $\underline{69.75}^{\pm 2.16}$ \\
  & & \cellcolor{sctshighlight}\textbf{\method{}}
    & \cellcolor{sctshighlight}$\mathbf{95.91}^{\pm 0.35}$
    & \cellcolor{sctshighlight}$\mathbf{91.16}^{\pm 0.63}$
    & \cellcolor{sctshighlight}$\mathbf{72.30}^{\pm 0.54}$ \\
\midrule

\multirow{8}{*}{\textbf{StepShield}}
  & \multirow{4}{*}{Llama3.2-3B-it}
    & SAFE   & $66.18^{\pm 0.83}$ & $59.57^{\pm 0.53}$ & $51.28^{\pm 1.33}$ \\
  & & H\&S   & $71.21^{\pm 7.03}$ & $58.18^{\pm 2.71}$ & $\underline{51.57}^{\pm 1.26}$ \\
  & & TRACES & $\underline{76.56}^{\pm 3.20}$ & $\underline{62.04}^{\pm 2.45}$ & $50.44^{\pm 4.21}$ \\
  & & \cellcolor{sctshighlight}\textbf{\method{}}
    & \cellcolor{sctshighlight}$\mathbf{90.08}^{\pm 1.78}$
    & \cellcolor{sctshighlight}$\mathbf{82.25}^{\pm 4.20}$
    & \cellcolor{sctshighlight}$\mathbf{55.01}^{\pm 2.26}$ \\
\cmidrule(l){2-6}
  & \multirow{4}{*}{Llama3.1-8B-it}
    & SAFE   & $82.82^{\pm 2.41}$ & $64.20^{\pm 0.71}$ & $\underline{53.61}^{\pm 0.24}$ \\
  & & H\&S   & $\underline{87.54}^{\pm 1.81}$ & $59.26^{\pm 1.22}$ & $52.61^{\pm 1.33}$ \\
  & & TRACES & $\mathbf{89.59}^{\pm 2.65}$ & $\underline{77.78}^{\pm 3.24}$ & $\mathbf{56.01}^{\pm 0.72}$ \\
  & & \cellcolor{sctshighlight}\textbf{\method{}}
    & \cellcolor{sctshighlight}$86.78^{\pm 1.92}$
    & \cellcolor{sctshighlight}$\mathbf{79.48}^{\pm 2.94}$
    & \cellcolor{sctshighlight}$49.06^{\pm 2.67}$ \\
\bottomrule
\end{tabular}}
\end{minipage}
\hfill
\begin{minipage}[t]{0.485\textwidth}
\centering\small
\caption{Ablation on the perturbation strategy.}
\vspace{-5pt}
\label{tab:training-ablation-review}
\setlength{\tabcolsep}{4pt}
\renewcommand{\arraystretch}{1.238}
\resizebox{\linewidth}{!}{%
\begin{tabular}{@{}lllccc@{}}
\toprule
\textbf{Benchmark} & \textbf{Base Model} & \textbf{Pert. Strategy} &
\textbf{AUROC}$\uparrow$ & \textbf{wACC}$\uparrow$ & \textbf{TWA}$\uparrow$ \\
\midrule

\multirow{6}{*}{\textbf{\textsc{AFTraj-2K}}}
  & \multirow{3}{*}{Qwen3-4B}
    & w/o Perturbation & $\underline{95.01}$ & $\mathbf{90.96}$ & $\underline{71.75}$ \\
  & & w/ Random Pert.  & $94.40$ & $89.16$ & $68.24$ \\
  & & \cellcolor{sctshighlight}\textbf{w/ \emph{Dream} Pert.}
    & \cellcolor{sctshighlight}$\mathbf{95.31}$
    & \cellcolor{sctshighlight}$\underline{90.36}$
    & \cellcolor{sctshighlight}$\mathbf{72.28}$ \\
\cmidrule(l){2-6}
  & \multirow{3}{*}{Qwen3-8B}
    & w/o Perturbation & $93.65$ & $\underline{90.06}$ & $\underline{71.04}$ \\
  & & w/ Random Pert.  & $\underline{94.60}$ & $89.46$ & $69.80$ \\
  & & \cellcolor{sctshighlight}\textbf{w/ \emph{Dream} Pert.}
    & \cellcolor{sctshighlight}$\mathbf{96.21}$
    & \cellcolor{sctshighlight}$\mathbf{91.57}$
    & \cellcolor{sctshighlight}$\mathbf{73.70}$ \\
\midrule

\multirow{6}{*}{\textbf{StepShield}}
  & \multirow{3}{*}{Qwen3-4B}
    & w/o Perturbation & $89.78$ & $\underline{84.72}$ & $\underline{53.10}$ \\
  & & w/ Random Pert.  & $\underline{91.20}$ & $84.26$ & $52.78$ \\
  & & \cellcolor{sctshighlight}\textbf{w/ \emph{Dream} Pert.}
    & \cellcolor{sctshighlight}$\mathbf{92.91}$
    & \cellcolor{sctshighlight}$\mathbf{90.28}$
    & \cellcolor{sctshighlight}$\mathbf{57.48}$ \\
\cmidrule(l){2-6}
  & \multirow{3}{*}{Qwen3-8B}
    & w/o Perturbation & $93.82$ & $87.96$ & $\underline{57.32}$ \\
  & & w/ Random Pert.  & $\underline{94.37}$ & $\mathbf{89.81}$ & $\mathbf{57.72}$ \\
  & & \cellcolor{sctshighlight}\textbf{w/ \emph{Dream} Pert.}
    & \cellcolor{sctshighlight}$\mathbf{94.59}$
    & \cellcolor{sctshighlight}$\underline{88.43}$
    & \cellcolor{sctshighlight}$56.42$ \\
\bottomrule
\end{tabular}}
\end{minipage}
\vspace{-8pt}
\end{table}

\subsection{Mechanism Analysis}
\label{exp:mecha}

We analyze how \textit{Dream} constructs hard latent variants and how sensitivity to its perturbations relates to held-out prediction errors.
Appendix~\ref{app:mechanism-analysis} provides detailed experimental settings and analyses.

\noindent \textbf{Variant construction.}
We first examine whether \textit{Dream} constructs harder latent variants than random perturbations do.
After training Qwen3-8B with \method{}, we fix its learned parameters and sample 50 batches of 32 prefixes from the \textsc{AFTraj-2K} training set.
For each batch, we construct a loss-guided \textit{Dream} perturbation and sample random perturbations using the same settings.
We temporarily apply each candidate and measure the change in batch BCE relative to the unperturbed auditor on the same prefixes.
Each point in Figure~\ref{fig:loss-guided-vs-random} summarizes one batch, using the median of two draws for random perturbations.
\textit{Dream} increases loss on every batch, whereas random changes cluster near zero.
This contrast shows that loss-guided search finds representation changes that make outcome prediction harder for the fixed auditor.

\begin{figure}[t]
    \begin{minipage}[t]{0.485\textwidth}
    \centering
        \begin{subfigure}[b]{0.49\textwidth}
            \includegraphics[width=\linewidth]{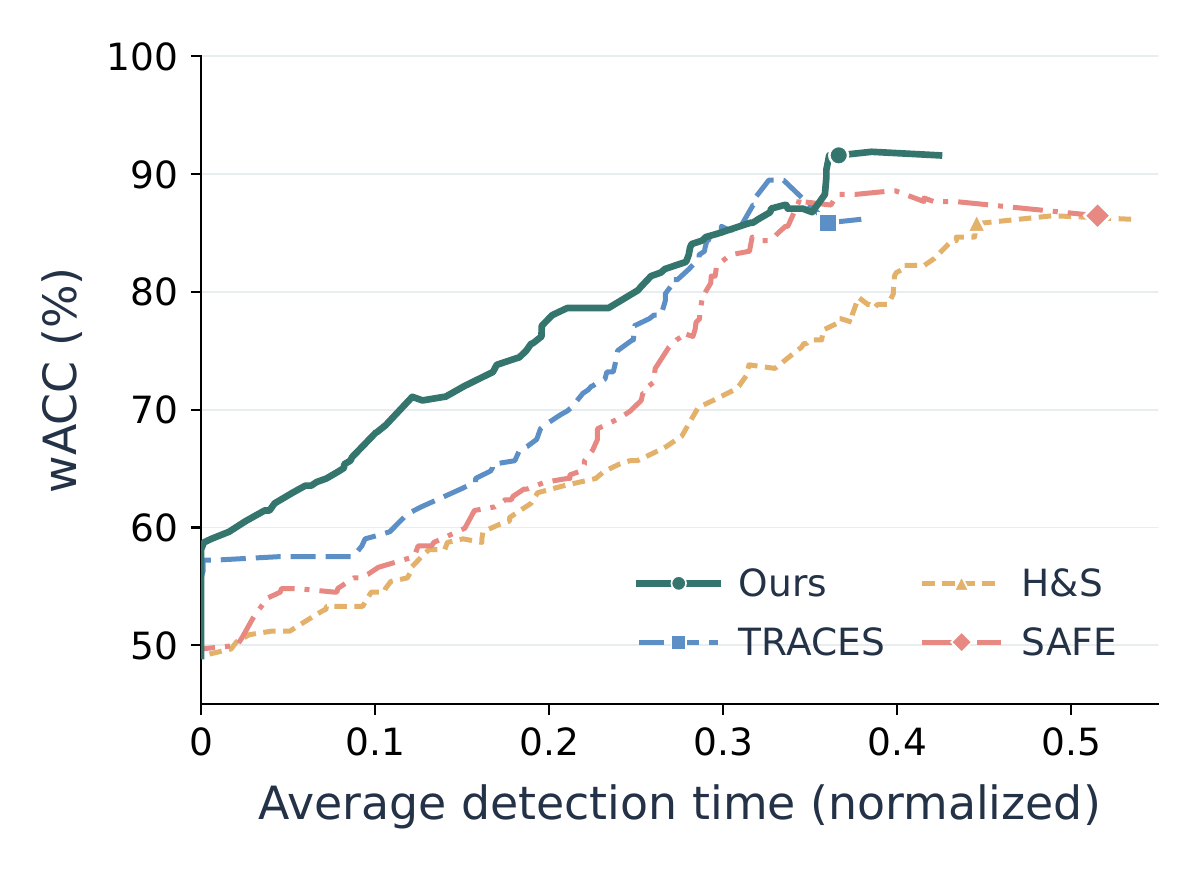}
            \caption{\small{\textsc{AFTraj-2K}}}
            \label{fig:tradeoff-aftraj}
        \end{subfigure}
        \hfill
        \begin{subfigure}[b]{0.49\textwidth}
            \includegraphics[width=\linewidth]{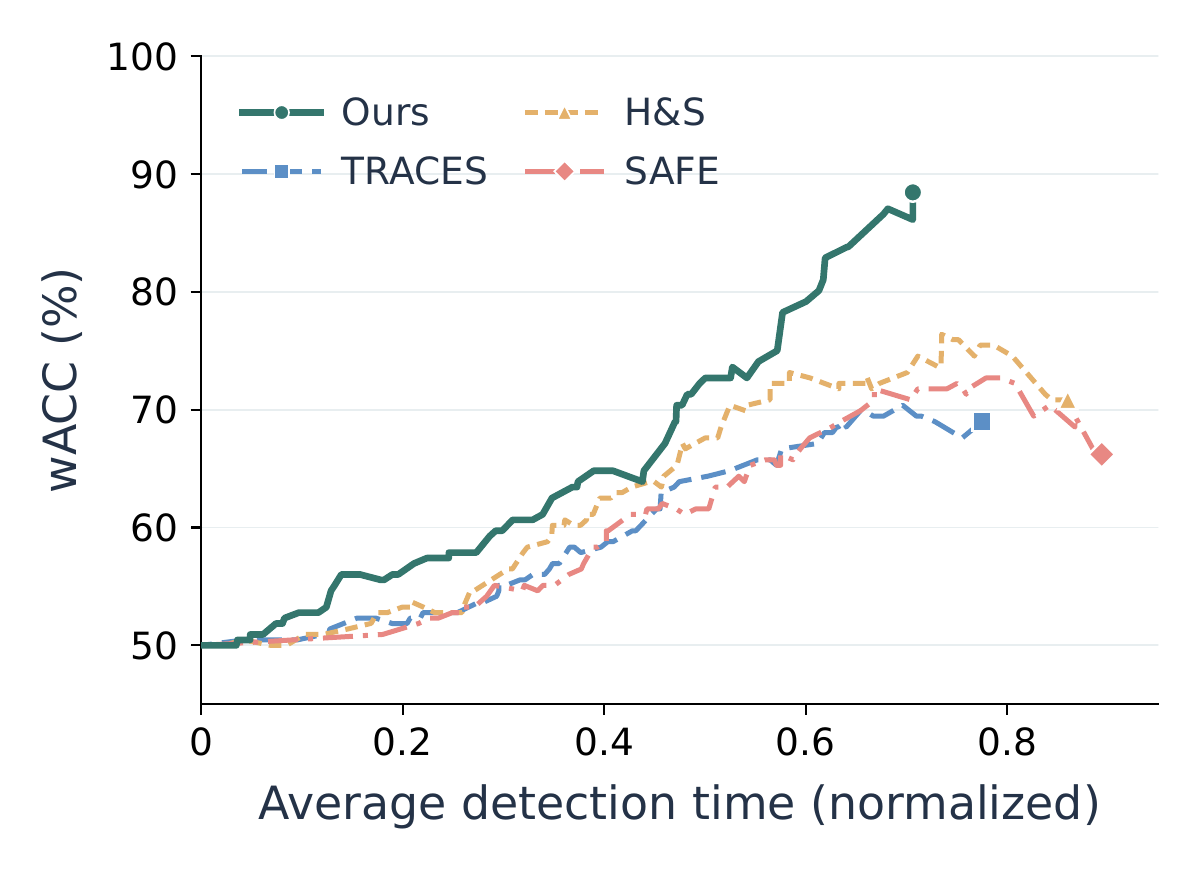}
            \caption{\small{StepShield}}
            \label{fig:tradeoff-stepshield}
        \end{subfigure}
    \caption{Detection accuracy--timeliness trade-offs for Qwen3-8B. Markers denote $\alpha=0.05$.}
    \label{fig:further-analysis}
    \end{minipage}
    \hfill
    \begin{minipage}[t]{0.485\textwidth}
        \begin{subfigure}[b]{0.49\textwidth}
            \includegraphics[width=\linewidth]{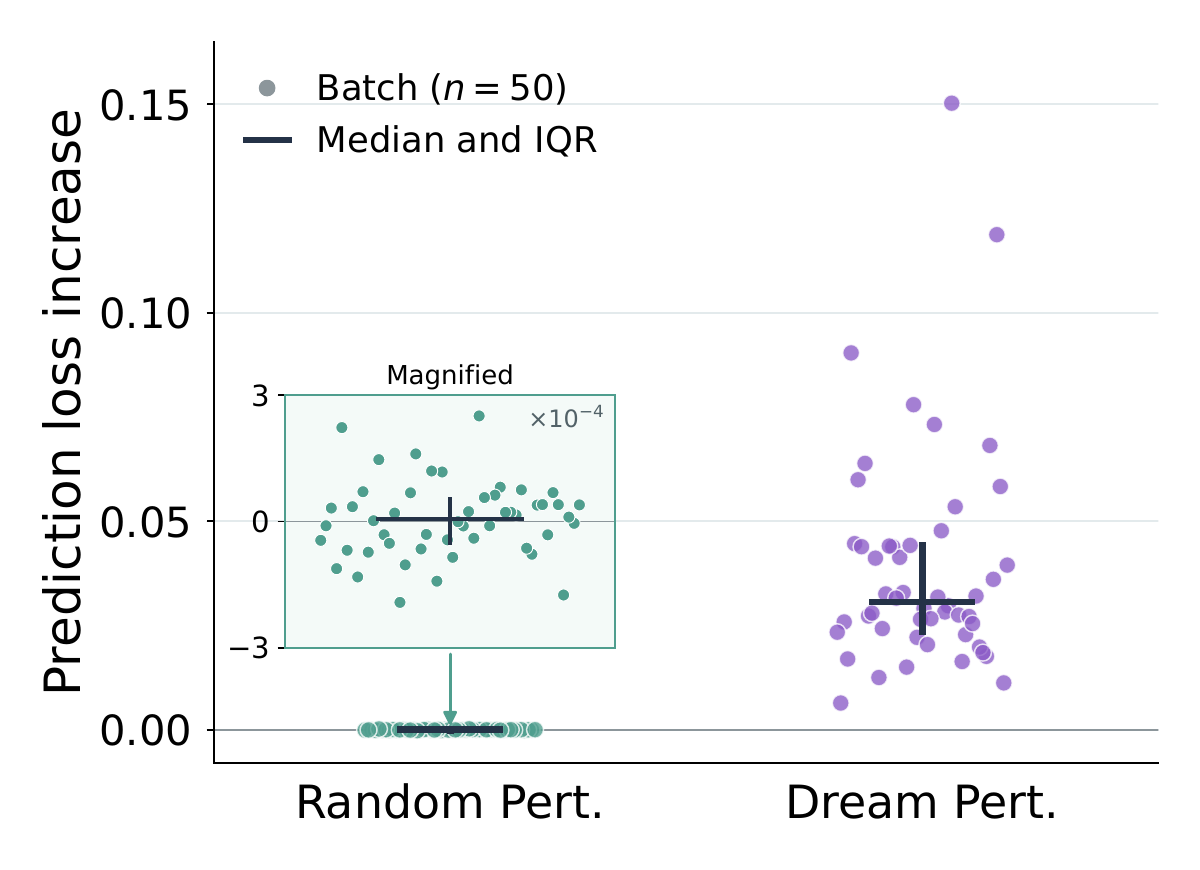}
            \caption{Per-batch loss increase}
            \label{fig:loss-guided-vs-random}
        \end{subfigure}
        \hfill
        \begin{subfigure}[b]{0.49\textwidth}
            \includegraphics[width=\linewidth]{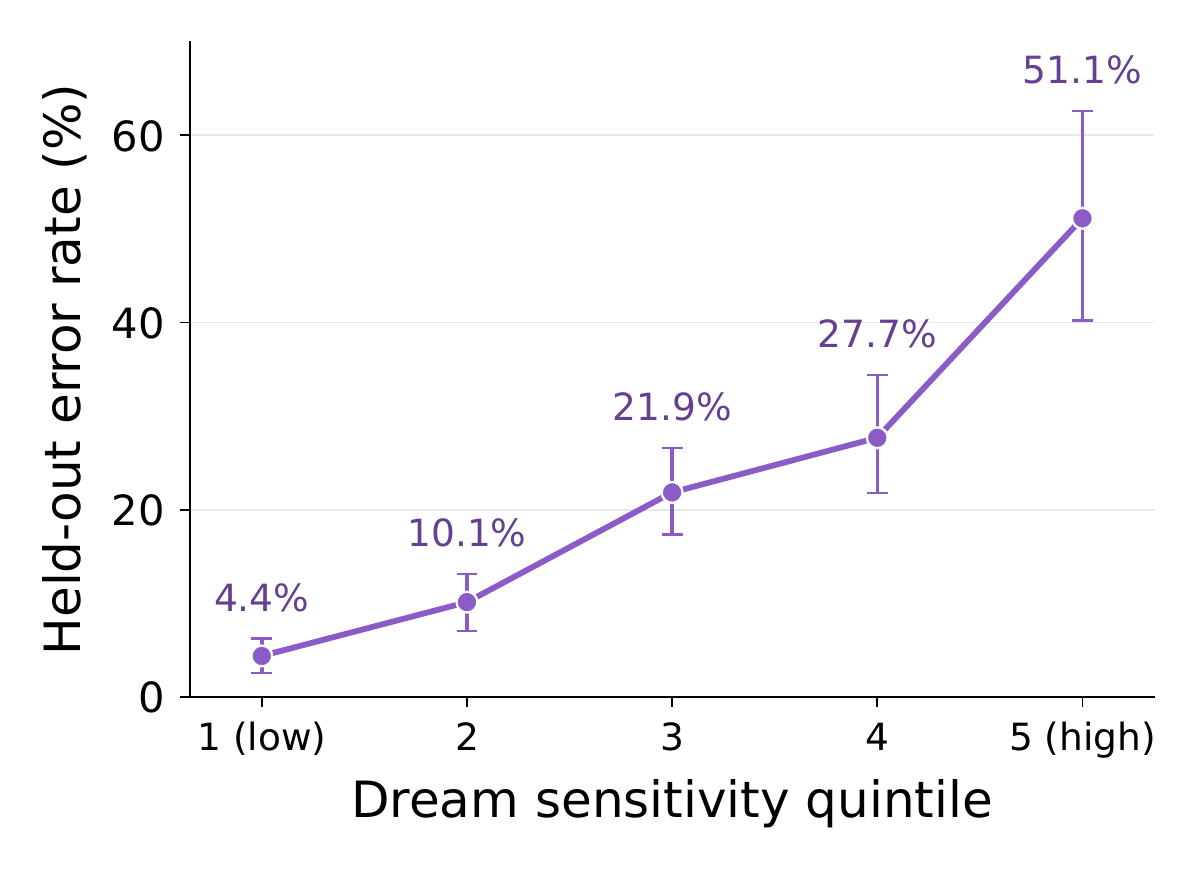}
            \caption{Dream sensitivity}
            \label{fig:dream_sensitivity}
        \end{subfigure}
    \caption{\textbf{(a)} Per-batch loss increases  and \textbf{(b)} Dream sensitivity - Held-out error association.}
    \label{fig:mechanism-analysis}
    \end{minipage}
\end{figure}

\noindent \textbf{Dream sensitivity.}
We next examine whether perturbations selected on training batches are informative about the auditor's prediction errors on held-out prefixes.
We use a fixed Qwen3-4B auditor trained without perturbations and construct eight \textit{Dream} perturbations from separate training batches.
For each held-out \textsc{AFTraj-2K} prefix, we measure the absolute change in its failure score under each perturbation and average these changes to obtain its \emph{Dream sensitivity}.
We sort prefixes by sensitivity into five equal-sized groups and compute each group's error rate from the unperturbed predictions and recorded outcomes.
Figure~\ref{fig:dream_sensitivity} shows that error rates increase from the lowest to the highest sensitivity group.
This association suggests that loss-guided perturbations reflect weaknesses in the auditor's learned outcome discrimination, supporting the construction of hard latent variants around prediction difficulties that also arise on held-out prefixes.

\subsection{Case Study}
\label{exp:case}

\begin{wrapfigure}{r}{0.59\linewidth}
\centering
\vspace{-20pt}
\includegraphics[width=\linewidth]{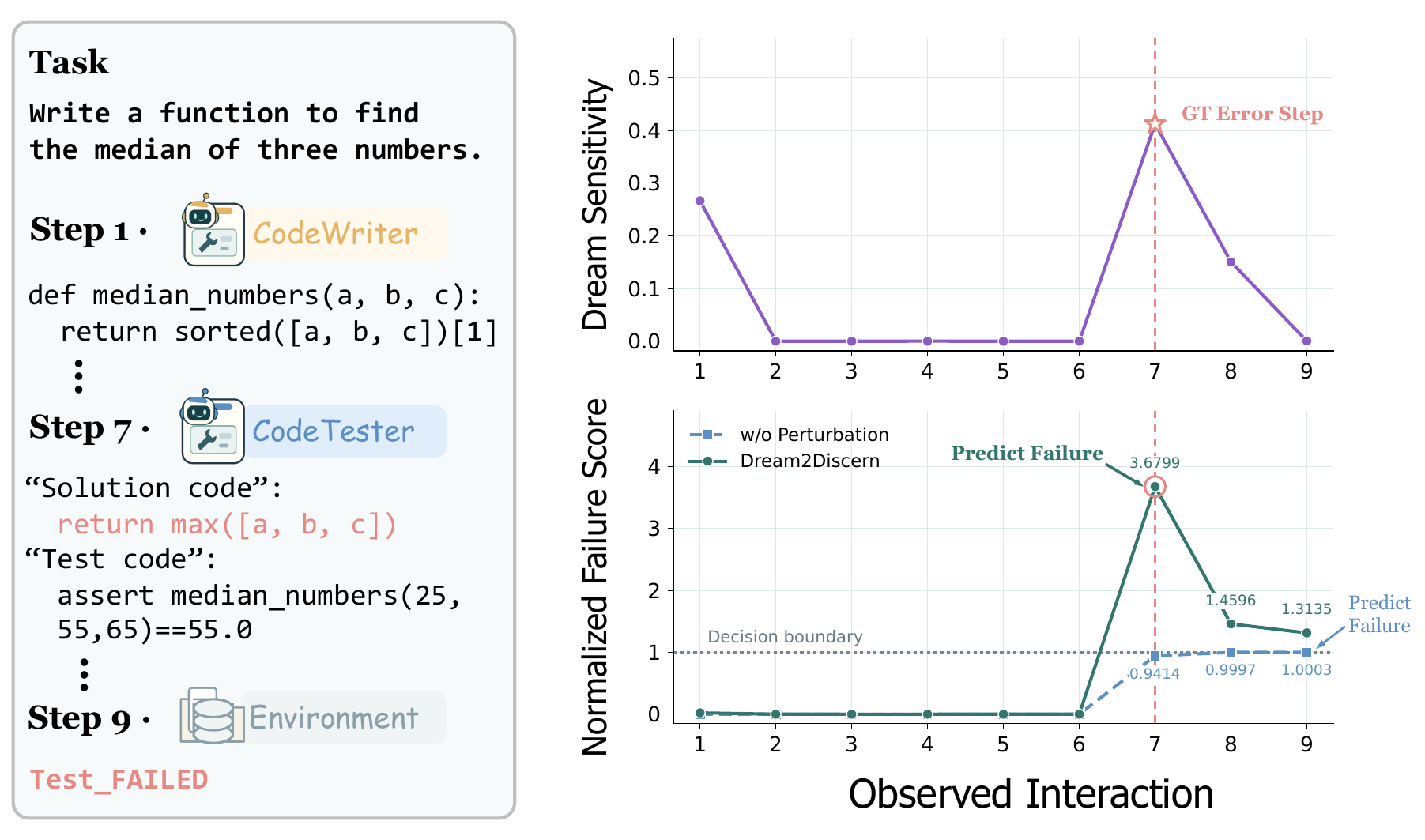}
\vspace{-12pt}
\caption{Case study of runtime failure prediction on a held-out coding trajectory from \textsc{AFTraj-2K}.}
\label{fig:case}
\vspace{-10pt}
\end{wrapfigure}

\leavevmode\noindent \textbf{Qualitative case studies.} Figure~\ref{fig:case} illustrates a held-out trajectory where \method{} predicts the eventual failure at the ground-truth error step, two interactions before the environment reports \texttt{TEST\_FAILED}.
In this case, \texttt{CodeTester} mistakenly replaces \texttt{CodeWriter}'s correct \texttt{median} implementation with a \texttt{max} function during verification, thus leading the task to fail.
As shown in the lower right panel, the auditor trained without perturbations first predicts failure at step~9, where the environment has already reported the test failure.
However, when we temporarily apply \textit{Dream} perturbations to the same auditor, its sensitivity exhibits a pronounced peak at step~7, as shown in the upper-right panel.
This peak shows that the \textit{Dream} perturbations elicit their strongest response at the held-out error step, which the unperturbed auditor has not yet flagged.
The lower right panel further shows that after training on hard latent variants,
\method{} promptly predicts the eventual failure from the observed prefix at the
ground-truth error step.

\section{Related Work}
\label{app:related-work}

\noindent \textbf{LLM-based agentic systems.}
LLM-based agents combine reasoning, actions, and tool use to interact with external environments~\citep{yao2022react,schick2023toolformer,zhang2025cot}.
Frameworks such as AutoGen~\citep{wu2024autogen}, MetaGPT~\citep{hong2023metagpt}, and CAMEL~\citep{li2023camel} organize these capabilities through specialized roles and inter-agent communication.
GPTSwarm~\citep{zhuge2024language} further optimizes prompts and agent connectivity, while Magentic-One~\citep{fourney2024magentic} coordinates web browsing, file navigation, and code execution.
Other work studies how agents retain information across interactions~\citep{sun2026rethinking} and how their reliability changes over a deployed lifespan~\citep{zhu2026your}.
Alongside these advances, reliable deployment requires assessing whether an ongoing execution is likely to fail, motivating our focus on runtime prediction from observed prefixes.

\noindent \textbf{Failure analysis for LLM agents.}
Failure analysis characterizes errors in agent executions and provides guidance for diagnosis and improvement.
Common interface corruptions can also disrupt computer-use agents without adversarial intent~\citep{pmlr-v306-sun26ba}.
MAST~\citep{cemri2026multi} organizes multi-agent failures around system design, inter-agent misalignment, and task verification.
Who\&When~\citep{zhang2025agent} benchmarks attribution to responsible agents and decisive error steps, while AgenTracer~\citep{zhang2025agentracer} uses counterfactual replay and fault injection to construct supervision for a trained failure tracer.
AgentDebug~\citep{zhu2025llm} diagnoses errors across agent components and generates corrective feedback to support recovery.
OAT~\citep{yeh2026tracing} learns trajectory dynamics from successful executions and uses deviations from these dynamics to locate error steps in failed trajectories.
Extending attribution beyond failed executions, \citet{qian2026behind} study which historical interactions and evidence drive an agent's actions.
These studies support diagnosis and improvement, while runtime prediction complements them by assessing terminal outcomes before execution finishes, when intervention remains possible.

\noindent \textbf{Runtime failure prediction.}
Runtime auditors estimate outcomes from partial execution histories to support timely decisions.
For vision-language-action policies, SAFE~\citep{gu2026safe} predicts failure from internal features, while Hide-and-Seek~\citep{park2026hide} discovers temporally structured failure signals through contrastive learning with trajectory-level labels.
For LLM agents, AgentForesight~\citep{zhang2026agentforesight} trains an online auditor with decisive-error annotations, whereas TRACES~\citep{li2026traces} models the temporal evolution of latent risk features from trajectory-level supervision.
PrefixGuard~\citep{huang2026prefixguard} learns prefix-risk monitors from structured trace representations, and \citet{baidya2026evidence} combine sparse failure evidence with a learned stopping policy.
Related safety guardrails include AgentDoG~\citep{liu2026agentdog}, which provides contextual risk diagnosis for agent trajectories.
StepGuard~\citep{zheng2026stepguard} combines context-matched safe and unsafe trajectories with reinforcement learning that balances performance across these classes to assess tool actions before execution.
We focus on the complementary question of how to target the auditor's current prediction difficulties without expanding its trajectory corpus.
\method{} uses its prediction loss to construct hard latent variants of recorded prefixes and learns from them with the original outcome labels, without additional trajectory collection or verification.

\section{Conclusion}

In this paper, we highlight that broader failure coverage alone does not identify the current auditor's hard cases, motivating implicit augmentation guided by prediction loss.
To realize this idea, we present \method{}, a runtime auditor that learns from hard latent variants without acquiring additional labeled trajectories.
\textit{Dream} constructs these variants through loss-guided operator perturbations, and \textit{Discern} learns from them using the original outcome labels.
The learned auditor operates without perturbations at runtime.
Evaluations on \textsc{AFTraj-2K} and StepShield across model families demonstrate improved runtime failure prediction.
These findings support targeting the auditor's current prediction difficulties to improve learning from a fixed trajectory corpus.

\clearpage

\section*{AI Use Statement}

The authors led the research, method development, experiments, and manuscript preparation.
Generative AI tools were used in a supporting role to refine methodological and experimental choices, assist with mathematical formulation and proof development, support implementation and data analysis, and draft or edit portions of the manuscript.
LLM use for trajectory editing, outcome checks, and baseline judgments is documented in Appendices~\ref{app:trajectory-editing} and~\ref{app:exp-baselines}.
The authors critically reviewed and revised AI-assisted material against the implementation and recorded results, made the final scientific and editorial decisions, and take responsibility for the entire submission, including AI-assisted text, claims, code, and artifacts.

\section*{Ethics Statement}

This work studies runtime failure prediction on existing agent-trajectory benchmarks to support reliable agent deployment.
Auditor predictions can be incorrect and do not certify safe execution or prevent harmful actions.
Deployment should retain task-specific safeguards and human oversight, and protect sensitive information in the execution histories supplied to the auditor.

\section*{Reproducibility Statement}

The problem setting and method are specified in Sections~\ref{sec:problem} and~\ref{sec:method}, with the training algorithm and runtime calibration detailed in Appendix~\ref{app:method-details}.
Appendix~\ref{app:proofs} provides the assumptions and complete proofs of the theoretical results.
Dataset partitions, prefix construction, training configurations, baseline implementations, and evaluation metrics are described in Appendix~\ref{app:setup}, and supplementary analysis protocols are given in Appendix~\ref{app:additional-analyses}.

\bibliographystyle{plainnat}
\bibliography{references}

@article{fourney2024magentic,
  title={Magentic-one: A generalist multi-agent system for solving complex tasks},
  author={Fourney, Adam and Bansal, Gagan and Mozannar, Hussein and Tan, Cheng and Salinas, Eduardo and Niedtner, Friederike and Proebsting, Grace and Bassman, Griffin and Gerrits, Jack and Alber, Jacob and others},
  journal={arXiv preprint arXiv:2411.04468},
  year={2024}
}

@article{cemri2026multi,
  title={Why do multi-agent llm systems fail?},
  author={Cemri, Mert and Pan, Melissa Z and Yang, Shuyi and Agrawal, Lakshya A and Chopra, Bhavya and Tiwari, Rishabh and Keutzer, Kurt and Parameswaran, Aditya and Klein, Dan and Ramchandran, Kannan and others},
  journal={Advances in Neural Information Processing Systems},
  volume={38},
  year={2026}
}

@inproceedings{xie2026agentsynth,
  title={Agentsynth: Scalable task generation for generalist computer-use agents},
  author={Xie, Jingxu and Xu, Dylan and Zhao, Xuandong and Song, Dawn},
  booktitle={International Conference on Learning Representations},
  volume={2026},
  pages={155149--155177},
  year={2026}
}

@article{felicia2026stepshield,
  title={StepShield: When, not whether to intervene on rogue agents},
  author={Felicia, Gloria and Sasindran, Zitha and He, Jinfeng and Eniolade, Michael and Kumar, Hemant and Angati, Milan Hussain},
  journal={arXiv preprint arXiv:2601.22136},
  year={2026}
}

@article{gu2026safe,
  title={Safe: Multitask failure detection for vision-language-action models},
  author={Gu, Qiao and Ju, Yuanliang and Sun, Shengxiang and Gilitschenski, Igor and Nishimura, Haruki and Itkina, Masha and Shkurti, Florian},
  journal={Advances in Neural Information Processing Systems},
  volume={38},
  pages={40041--40076},
  year={2026}
}

@article{timans2023max,
  title={Max-rank: Efficient multiple testing for conformal prediction},
  author={Timans, Alexander and Straehle, Christoph-Nikolas and Sakmann, Kaspar and Naesseth, Christian A and Nalisnick, Eric},
  journal={arXiv preprint arXiv:2311.10900},
  year={2023}
}

@article{malinin2020uncertainty,
  title={Uncertainty estimation in autoregressive structured prediction},
  author={Malinin, Andrey and Gales, Mark},
  journal={arXiv preprint arXiv:2002.07650},
  year={2020}
}

@article{ren2022out,
  title={Out-of-distribution detection and selective generation for conditional language models},
  author={Ren, Jie and Luo, Jiaming and Zhao, Yao and Krishna, Kundan and Saleh, Mohammad and Lakshminarayanan, Balaji and Liu, Peter J},
  journal={arXiv preprint arXiv:2209.15558},
  year={2022}
}

@article{inan2023llama,
  title={Llama guard: Llm-based input-output safeguard for human-ai conversations},
  author={Inan, Hakan and Upasani, Kartikeya and Chi, Jianfeng and Rungta, Rashi and Iyer, Krithika and Mao, Yuning and Tontchev, Michael and Hu, Qing and Fuller, Brian and Testuggine, Davide and others},
  journal={arXiv preprint arXiv:2312.06674},
  year={2023}
}

@article{romer2026failure,
  title={Failure prediction at runtime for generative robot policies},
  author={R{\"o}mer, Ralf and Kobras, Adrian and Worbis, Luca and Schoellig, Angela},
  journal={Advances in Neural Information Processing Systems},
  volume={38},
  pages={7631--7670},
  year={2026}
}

@article{wu2020adversarial,
  title={Adversarial weight perturbation helps robust generalization},
  author={Wu, Dongxian and Xia, Shu-Tao and Wang, Yisen},
  journal={Advances in neural information processing systems},
  volume={33},
  pages={2958--2969},
  year={2020}
}

@article{hu2021lora,
  title={Lora: Low-rank adaptation of large language models},
  author={Hu, Edward J and Shen, Yelong and Wallis, Phillip and Allen-Zhu, Zeyuan and Li, Yuanzhi and Wang, Shean and Wang, Lu and Chen, Weizhu},
  journal={arXiv preprint arXiv:2106.09685},
  year={2021}
}

@article{wang2023out,
  title={Out-of-distribution detection with implicit outlier transformation},
  author={Wang, Qizhou and Ye, Junjie and Liu, Feng and Dai, Quanyu and Kalander, Marcus and Liu, Tongliang and Hao, Jianye and Han, Bo},
  journal={arXiv preprint arXiv:2303.05033},
  year={2023}
}

@inproceedings{xu2025agenttrek,
  title={Agenttrek: Agent trajectory synthesis via guiding replay with web tutorials},
  author={Xu, Yiheng and Lu, Dunjie and Shen, Zhennan and Wang, Junli and Wang, Zekun and Mao, Yuchen and Xiong, Caiming and Yu, Tao},
  booktitle={International Conference on Learning Representations},
  volume={2025},
  pages={79822--79843},
  year={2025}
}

@article{bonagiri2026causalflow,
  title={CausalFlow: Causal Attribution and Counterfactual Repair for LLM Agent Failures},
  author={Bonagiri, Akash and Borkar, Devang and Anderias, Gerard Janno and Rafatirad, Setareh and Homayoun, Houman},
  journal={arXiv preprint arXiv:2605.25338},
  year={2026}
}

@inproceedings{kong2026aegis,
  title={Aegis: Automated error generation and attribution for multi-agent systems},
  author={Kong, Fanqi and Zhang, Ruijie and Yin, Huaxiao and Zhang, Guibin and Zhang, Xiaofei and Chen, Ziang and Zhang, Zhaowei and Zhang, Xiaoyuan and Zhu, Song-Chun and Feng, Xue},
  booktitle={International Conference on Learning Representations},
  volume={2026},
  pages={52994--53028},
  year={2026}
}

@article{huang2026prefixguard,
  title={PrefixGuard: From LLM-Agent Traces to Online Failure-Warning Monitors},
  author={Huang, Xinmiao and Hu, Jinwei and Roy, Rajarshi and Wu, Changshun and Dong, Yi and Huang, Xiaowei},
  journal={arXiv preprint arXiv:2605.06455},
  year={2026}
}

@article{baidya2026evidence,
  title={When Evidence is Sparse: Weakly Supervised Early Failure Alerting in Dialogs and LLM-Agent Trajectories},
  author={Baidya, Avinash and Liang, Xinran and Guo, Ruocheng and Gao, Xiang and Das, Kamalika},
  journal={arXiv preprint arXiv:2606.05414},
  year={2026}
}

@inproceedings{oh2026uncertainty,
  title={Uncertainty quantification in llm agents: Foundations, emerging challenges, and opportunities},
  author={Oh, Changdae and Park, Seongheon and Kim, To Eun and Li, Jiatong and Li, Wendi and Yeh, Samuel and Du, Sean and Hassani, Hamed and Bogdan, Paul and Song, Dawn and others},
  booktitle={Proceedings of the 64th Annual Meeting of the Association for Computational Linguistics (Volume 1: Long Papers)},
  pages={16219--16250},
  year={2026}
}

@article{zhang2026agentforesight,
  title={Agentforesight: Online auditing for early failure prediction in multi-agent systems},
  author={Zhang, Boxuan and Zhu, Jianing and Shi, Zeru and Liu, Dongfang and Tang, Ruixiang},
  journal={arXiv preprint arXiv:2605.08715},
  year={2026}
}

@article{li2026traces,
  title={TRACES: Proactive Safety Auditing for Multi-Turn LLM Agents via Trajectory-State Modeling},
  author={Li, Jiaqian and Li, Yanshu and Zhang, Boxuan and Tang, Ruixiang and Huang, Kuan-Hao},
  journal={arXiv preprint arXiv:2605.27690},
  year={2026}
}

@inproceedings{ruan2024identifying,
  title={Identifying the risks of LM agents with an LM-emulated sandbox},
  author={Ruan, Yangjun and Dong, Honghua and Wang, Andrew and Pitis, Silviu and Zhou, Yongchao and Ba, Jimmy and Dubois, Yann and Maddison, Chris and Hashimoto, Tatsunori},
  booktitle={International Conference on Learning Representations},
  volume={2024},
  pages={27031--27098},
  year={2024}
}

@article{park2026hide,
  title={Hide-and-Seek in Trajectories: Discovering Failure Signals for VLA Runtime Monitoring},
  author={Park, Seongheon and Li, Wendi and Oh, Changdae and Yeh, Samuel and Kira, Zsolt and Hagenow, Michael and Li, Sharon},
  journal={arXiv preprint arXiv:2605.30834},
  year={2026}
}

@inproceedings{wu2024autogen,
  title={Autogen: Enabling next-gen LLM applications via multi-agent conversations},
  author={Wu, Qingyun and Bansal, Gagan and Zhang, Jieyu and Wu, Yiran and Li, Beibin and Zhu, Erkang and Jiang, Li and Zhang, Xiaoyun and Zhang, Shaokun and Liu, Jiale and others},
  booktitle={First conference on language modeling},
  year={2024}
}

@inproceedings{hong2023metagpt,
  title={MetaGPT: Meta programming for a multi-agent collaborative framework},
  author={Hong, Sirui and Zhuge, Mingchen and Chen, Jonathan and Zheng, Xiawu and Cheng, Yuheng and Wang, Jinlin and Zhang, Ceyao and Wang, Zili and Yau, Steven Ka Shing and Lin, Zijuan and others},
  booktitle={The twelfth international conference on learning representations},
  year={2023}
}

@article{li2023camel,
  title={Camel: Communicative agents for" mind" exploration of large language model society},
  author={Li, Guohao and Hammoud, Hasan and Itani, Hani and Khizbullin, Dmitrii and Ghanem, Bernard},
  journal={Advances in neural information processing systems},
  volume={36},
  pages={51991--52008},
  year={2023}
}

@inproceedings{yao2022react,
  title={React: Synergizing reasoning and acting in language models},
  author={Yao, Shunyu and Zhao, Jeffrey and Yu, Dian and Du, Nan and Shafran, Izhak and Narasimhan, Karthik R and Cao, Yuan},
  booktitle={The eleventh international conference on learning representations},
  year={2022}
}

@article{zhang2025agentracer,
  title={AgenTracer: Who Is Inducing Failure in the LLM Agentic Systems?},
  author={Zhang, Guibin and Wang, Junhao and Chen, Junjie and Zhou, Wangchunshu and Wang, Kun and Yan, Shuicheng},
  journal={arXiv preprint arXiv:2509.03312},
  year={2025}
}

@article{zhang2025agent,
  title={Which agent causes task failures and when? on automated failure attribution of llm multi-agent systems},
  author={Zhang, Shaokun and Yin, Ming and Zhang, Jieyu and Liu, Jiale and Han, Zhiguang and Zhang, Jingyang and Li, Beibin and Wang, Chi and Wang, Huazheng and Chen, Yiran and others},
  journal={arXiv preprint arXiv:2505.00212},
  year={2025}
}

@article{zhu2025llm,
  title={Where llm agents fail and how they can learn from failures},
  author={Zhu, Kunlun and Liu, Zijia and Li, Bingxuan and Tian, Muxin and Yang, Yingxuan and Zhang, Jiaxun and Han, Pengrui and Xie, Qipeng and Cui, Fuyang and Zhang, Weijia and others},
  journal={arXiv preprint arXiv:2509.25370},
  year={2025}
}

@article{zhang2025dive,
  title={Dive into the Agent Matrix: A Realistic Evaluation of Self-Replication Risk in LLM Agents},
  author={Zhang, Boxuan and Yu, Yi and Guo, Jiaxuan and Shao, Jing},
  journal={arXiv preprint arXiv:2509.25302},
  year={2025}
}

@article{yang2025qwen3,
  title={Qwen3 technical report},
  author={Yang, An and Li, Anfeng and Yang, Baosong and Zhang, Beichen and Hui, Binyuan and Zheng, Bo and Yu, Bowen and Gao, Chang and Huang, Chengen and Lv, Chenxu and others},
  journal={arXiv preprint arXiv:2505.09388},
  year={2025}
}

@article{grattafiori2024llama,
  title={The llama 3 herd of models},
  author={Grattafiori, Aaron and Dubey, Abhimanyu and Jauhri, Abhinav and Pandey, Abhinav and Kadian, Abhishek and Al-Dahle, Ahmad and Letman, Aiesha and Mathur, Akhil and Schelten, Alan and Vaughan, Alex and others},
  journal={arXiv preprint arXiv:2407.21783},
  year={2024}
}

@article{schick2023toolformer,
  title={Toolformer: Language models can teach themselves to use tools},
  author={Schick, Timo and Dwivedi-Yu, Jane and Dess{\`\i}, Roberto and Raileanu, Roberta and Lomeli, Maria and Hambro, Eric and Zettlemoyer, Luke and Cancedda, Nicola and Scialom, Thomas},
  journal={Advances in neural information processing systems},
  volume={36},
  pages={68539--68551},
  year={2023}
}

@article{zhuge2024language,
  title={Language agents as optimizable graphs},
  author={Zhuge, Mingchen and Wang, Wenyi and Kirsch, Louis and Faccio, Francesco and Khizbullin, Dmitrii and Schmidhuber, J{\"u}rgen},
  journal={arXiv preprint arXiv:2402.16823},
  year={2024}
}

@article{qian2026behind,
  title={The Why Behind the Action: Unveiling Internal Drivers via Agentic Attribution},
  author={Qian, Chen and Wang, Peng and Liu, Dongrui and Yang, Junyao and Guo, Dadi and Tang, Ling and Mei, Jilin and Ren, Qihan and Shao, Shuai and Liu, Yong and others},
  journal={arXiv preprint arXiv:2601.15075},
  year={2026}
}

@inproceedings{zhang2025cot,
  title={Cot-uq: Improving response-wise uncertainty quantification in llms with chain-of-thought},
  author={Zhang, Boxuan and Zhang, Ruqi},
  booktitle={Findings of the Association for Computational Linguistics: ACL 2025},
  pages={26114--26133},
  year={2025}
}

@article{liu2026agentdog,   title={Agentdog: A diagnostic guardrail framework for ai agent safety and security},   author={Liu, Dongrui and Ren, Qihan and Qian, Chen and Shao, Shuai and Xie, Yuejin and Li, Yu and Yang, Zhonghao and Luo, Haoyu and Wang, Peng and Liu, Qingyu and others},   journal={arXiv preprint arXiv:2601.18491},   year={2026} }

@article{zheng2026stepguard,   title={StepGuard: Learning Step-Level Guardrails with Scalable Supervision and Safety-Utility Balancing},   author={Zheng, Zhijie and Li, Yu and Qian, Chen and Fu, Yuqian and Fu, Yanwei and Sheng, Lu and Shao, Jing and Liu, Dongrui},   journal={arXiv preprint arXiv:2608.24777},   year={2026} }

@article{yeh2026tracing,   title={Tracing Agentic Failure from the Flow of Success},   author={Yeh, Samuel and Zhu, Yiwen and Deep, Shaleen and Li, Sharon},   journal={arXiv preprint arXiv:2607.12747},   year={2026} }

@inproceedings{pmlr-v306-sun26ba,
  title={{AgentHijack}: Benchmarking Computer Use Agent Robustness to Common Environment Corruptions},
  author={Sun, Jingwei and Zhu, Jianing and Li, Yuanyi and Liu, Tongliang and Hu, Xia and Han, Bo},
  booktitle={Proceedings of the 43rd International Conference on Machine Learning},
  series={Proceedings of Machine Learning Research},
  volume={306},
  pages={117524--117556},
  publisher={PMLR},
  year={2026}
}

@article{sun2026rethinking,
  title={Rethinking How to Remember: Beyond Atomic Facts in Lifelong {LLM} Agent Memory},
  author={Sun, Jingwei and Zhu, Jianing and Yao, Jiangchao and Liu, Tongliang and Han, Bo},
  journal={arXiv preprint arXiv:2605.19952},
  year={2026}
}

@article{zhu2026your,
  title={Your Agents Are Aging Too: Agent Lifespan Engineering for Deployed Systems},
  author={Zhu, Jianing and Ro, Yeonju and Robertson, John and Wang, Kevin and Li, Junbo and Vikalo, Haris and Akella, Aditya and Wang, Zhangyang},
  journal={arXiv preprint arXiv:2605.26302},
  year={2026}
}

@article{li2023deepinception,
  title={Deepinception: Hypnotize large language model to be jailbreaker},
  author={Li, Xuan and Zhou, Zhanke and Zhu, Jianing and Yao, Jiangchao and Liu, Tongliang and Han, Bo},
  journal={arXiv preprint arXiv:2311.03191},
  year={2023}
}

\clearpage
\appendix

\section{Additional Method Details}
\label{app:method-details}

\subsection{Training Algorithm and Solver Details}
\label{app:solver}
Algorithm~\ref{alg:dream2discern} specifies the alternating updates after the supervised initialization in Appendix~\ref{app:exp-training}.
It uses the batch loss under the effective-operator perturbations in Section~\ref{sec:steer-to-confuse} and the scaled perturbation norm in Equation~\ref{eq:ambiguity-set}.
Here $\mathcal P_r$ denotes truncated SVD retaining at most $r$ leading singular components, and $\operatorname{Eval}$ computes a loss without gradients through the same numerical path for the unperturbed auditor and each candidate.

\begin{algorithm}[htbp]
\caption{Dream2Discern: learning from hard latent variants}
\label{alg:dream2discern}
\small
\begin{algorithmic}[1]
\Require Training-prefix index $\mathcal D$; initialized auditor $\Theta=(a,\psi)$;
  frozen backbone $\phi_0$; suffix query/value projections $\mathcal S$;
  radius $\gamma$; rank cap $r$; iterations $K$; batch size $2m$.
\State Fix $\omega_j\gets\lVert W_j^0\rVert_F$ for $j\in\mathcal S$;
  $\epsilon_{\mathrm{num}}\gets10^{-12}$.
\For{$k=1,\ldots,K$}
  \State Form $\mathcal B$ by drawing $m$ prefixes per class from $\mathcal D$ with replacement:
    parent group $\to$ trajectory $\to$ indexed prefix, uniformly at each level.
  \Statex \hspace{\algorithmicindent}\textbf{Dream: hold $\Theta$ fixed and select a shared perturbation.}
  \State $G_j\gets\left.\nabla_{V_j}\mathcal L_{\mathcal B}(\Theta,V)\right|_{V=0}$;
    $C_j\gets\omega_jG_j$ for all $j\in\mathcal S$.
  \State $d\gets\max\!\left\{\left(\sum_{j\in\mathcal S}\lVert C_j\rVert_F^2\right)^{1/2},
      \epsilon_{\mathrm{num}}\right\}$.
  \State $\overline V_{\mathcal B,j}\gets\mathcal P_r(\gamma\omega_j C_j/d)$
    for all $j\in\mathcal S$.
  \State $\displaystyle\overline V_{\mathcal B}\gets\overline V_{\mathcal B}\,
    \min\!\left\{1,\frac{\gamma}{\max\{\lVert\overline V_{\mathcal B}\rVert_\omega,\epsilon_{\mathrm{num}}\}}\right\}$;
    $\widehat V_{\mathcal B}\gets0$.
  \State $\ell_0\gets\operatorname{Eval}(\mathcal L_{\mathcal B}(\Theta,0))$.
  \For{$\lambda\in(1,2^{-1},\ldots,2^{-5})$ in order}
    \If{$\operatorname{Eval}(\mathcal L_{\mathcal B}(\Theta,\lambda\overline V_{\mathcal B}))>\ell_0$}
      \State $\widehat V_{\mathcal B}\gets\lambda\overline V_{\mathcal B}$; \textbf{break}.
    \EndIf
  \EndFor
  \Statex \hspace{\algorithmicindent}\textbf{Discern: learn from the returned variants.}
  \State $g_\Theta\gets\nabla_\Theta\mathcal L_{\mathcal B}
    (\Theta,\operatorname{sg}[\widehat V_{\mathcal B}])$.
  \State Remove the temporary perturbation; $\Theta\gets
    \operatorname{AdamW}(\Theta,\operatorname{clip}_{1}(g_\Theta))$.
\EndFor
\State \Return Auditor $f_\Theta$ with $V=0$.
\end{algorithmic}
\end{algorithm}

\noindent \textbf{Solver details.}
Each batch starts from $V=0$ and uses one normalized gradient step followed by SVD truncation and a global radial projection.
Normalization uses the full scaled gradient norm with the numerical floor in Algorithm~\ref{alg:dream2discern}, rather than the retained norm $g_r$ in Proposition~\ref{prop:linear-oracle}.
For the experimental settings in Appendix~\ref{app:exp-training}, the resulting candidate satisfies, in exact arithmetic,
\[
    \overline V_{\mathcal B,j}=\gamma\omega_j\frac{C_{j,r}}{d},
    \qquad
    \lVert\overline V_{\mathcal B}\rVert_\omega
    =\gamma\frac{g_r}{d}\leq\gamma,
\]
where $C_{j,r}$ uses the singular components retained by the algorithm.
Thus the candidate follows a maximizing direction of the linearized loss, but can have a smaller radius than the nominal linear solution; it is zero when $g_r=0$.
The subsequent acceptance test selects the first of six tested points that raises the measured batch loss, returning zero if all are rejected.

\noindent \textbf{Discern update.}
The outer backward pass holds $\widehat V_{\mathcal B}$ constant and computes gradients only for the persistent adapters and prediction head.
We remove the temporary perturbation, clip the global gradient norm at 1, and apply AdamW; optimizer settings are given in Appendix~\ref{app:exp-training}.
A zero returned perturbation yields the unperturbed batch update.
The saved auditor contains the learned adapters and head and uses $V=0$ at runtime.

\noindent \textbf{Perturbation space.}
We perturb representation operators to couple the latent changes of different prefixes through a shared transformation.
This follows the model-wise perspective of adversarial weight perturbation~\citep{wu2020adversarial}, which selects a weight perturbation using the loss over a batch.
In our setting, Equation~\ref{eq:implicit-transformation} gives the direct additive change $H_{q,j}^{V}V_j^{\top}$ at projection $j$ for fixed incoming states.
The shared operator couples these changes across tokens and prefixes, while their coefficients depend on the incoming states and their local output directions are restricted by the rank cap.
This defines a structured search space for hard latent variants: query and value perturbations change the attention computation under a fixed prediction head.

\subsection{Runtime Threshold Calibration}
\label{app:threshold}

\noindent \textbf{Threshold-fitting data.}
For \method{}, we sample 40\% of the parent groups represented in the training-prefix index using seed 0 and retain their successful trajectories with at least one nonterminal prefix.
These trajectories come from the auditor's training partition and are distinct from the evaluation trajectories.
With the trained auditor fixed at $V=0$, we score every nonterminal prefix, indexed by $t=1,\ldots,T_i-1$, for each of the $n_{\mathrm{cal}}$ selected trajectories.

\noindent \textbf{Stepwise reference distributions.}
Let $\mathcal C_t$ be the multiset of calibration scores at step $t$, with $m_t=|\mathcal C_t|$.
Because every nonterminal prefix is scored, $m_t$ is nonincreasing.
For $n_{\mathrm{cal}}\ge20$, define $t_\star=\max\{t:m_t\ge20\}$ and $b(t)=\min\{t,t_\star\}$, carrying the last supported reference distribution to later steps.
For sorted scores $s^{\mathrm{cal}}_{j,(1)}\le\cdots\le s^{\mathrm{cal}}_{j,(m_j)}$ in $\mathcal C_j$, define
\begin{equation}
    \begin{aligned}
        F_t(x)&=\frac{1}{m_{b(t)}}\sum_{s\in\mathcal C_{b(t)}}\mathbf 1\{s\le x\},\\
        Q_t^\uparrow(u)&=s^{\mathrm{cal}}_{b(t),(1+\lceil(m_{b(t)}-1)u\rceil)},\qquad u\in[0,1].
    \end{aligned}
    \label{eq:runtime-reference}
\end{equation}
The CDF includes ties, and $Q_t^\uparrow$ uses the \texttt{higher} quantile convention.

\noindent \textbf{Trajectory maxima and thresholds.}
We summarize each trajectory by its maximum empirical percentile over nonterminal steps, following the rank-aggregation idea of max-rank~\citep{timans2023max}.
For nominal level $\alpha\in(0,1)$, define
\begin{equation}
    \begin{aligned}
        R_i&=\max_{1\le t<T_i}F_t(s_{i,t}),\qquad
        k=\big\lceil(n_{\mathrm{cal}}+1)(1-\alpha)\big\rceil,\\
        \zeta_t&=
        \begin{cases}
            Q_t^\uparrow\big(R_{(k)}\big),&k\le n_{\mathrm{cal}},\\
            +\infty,&k>n_{\mathrm{cal}},
        \end{cases}
    \end{aligned}
    \label{eq:rank-band}
\end{equation}
where $R_{(k)}$ is the $k$-th smallest trajectory maximum.
All nonterminal scores enter $R_i$, including those after $t_\star$ evaluated against the carried reference distribution.
The resulting thresholds remain fixed at runtime, with $\zeta_t=\zeta_{t_\star}$ for $t>t_\star$ and failure predicted when $s_{i,t}\ge\zeta_t$ as in Equation~\ref{eq:runtime-decision}.
We use $\alpha=0.05$ for the main \method{} results; at the $\alpha=1$ endpoint of the threshold sweep, we set $\zeta_t=Q_t^\uparrow(0)$.
The same training trajectories determine both $F_t$ and $R_i$, so $\alpha$ is a nominal parameter of this empirical calibration procedure, not a finite-sample false-alarm guarantee.

\section{Theoretical Analysis}
\label{app:proofs}

\subsection{Proof of Proposition~\ref{prop:linear-oracle}}
\label{app:proof-linear-oracle}
\begin{proof}
For any $V\in\mathcal U_{\gamma,r}$, set $U_j=V_j/\omega_j$.
Since $\omega_j>0$, this rescaling preserves rank and gives
\[
    \sum_{j\in\mathcal S}\lVert U_j\rVert_F^2\leq\gamma^2,
    \qquad
    \sum_{j\in\mathcal S}\langle G_j,V_j\rangle
    =\sum_{j\in\mathcal S}\langle C_j,U_j\rangle.
\]
Order the singular values nonincreasingly and pad them with zeros when necessary.
Since $\operatorname{rank}(U_j)\leq r$, von Neumann's trace inequality followed by Cauchy--Schwarz yields
\[
    \begin{aligned}
        \langle C_j,U_j\rangle
        &\leq\sum_{k=1}^{r}\sigma_k(C_j)\,\sigma_k(U_j)\\
        &\leq
        \left(\sum_{k=1}^{r}\sigma_k(C_j)^2\right)^{1/2}
        \left(\sum_{k=1}^{r}\sigma_k(U_j)^2\right)^{1/2}
        =\lVert C_{j,r}\rVert_F\,\lVert U_j\rVert_F.
    \end{aligned}
\]
Summing over projections and applying Cauchy--Schwarz once more gives
\[
    \begin{aligned}
        \sum_{j\in\mathcal S}\langle C_j,U_j\rangle
        &\leq\sum_{j\in\mathcal S}\lVert C_{j,r}\rVert_F\,\lVert U_j\rVert_F\\
        &\leq
        \left(\sum_{j\in\mathcal S}\lVert C_{j,r}\rVert_F^2\right)^{1/2}
        \left(\sum_{j\in\mathcal S}\lVert U_j\rVert_F^2\right)^{1/2}
        \leq\gamma g_r.
    \end{aligned}
\]

For $g_r>0$, choose $U_j=\gamma C_{j,r}/g_r$.
These matrices have rank at most $r$ and satisfy $\sum_j\lVert U_j\rVert_F^2=\gamma^2$.
The truncated SVD also gives $\langle C_j,C_{j,r}\rangle=\lVert C_{j,r}\rVert_F^2$, so
\[
    \sum_{j\in\mathcal S}\langle C_j,U_j\rangle
    =\frac{\gamma}{g_r}\sum_{j\in\mathcal S}\lVert C_{j,r}\rVert_F^2
    =\gamma g_r.
\]
Thus, mapping back to $V_j=\omega_jU_j$ gives the feasible maximizer in Equation~\ref{eq:linear-oracle}, which attains the upper bound.
If $g_r=0$, the same bound is zero and is attained at $V=0$.
\end{proof}

\subsection{Proof of Proposition~\ref{prop:accepted-challenge}}
\label{app:proof-accepted-challenge}
\begin{proof}
Fix $\Theta$ and $\mathcal B$ as in the proposition.
Its remainder assumption states that, for every $V$ on the segment from $0$ to $\overline V_{\mathcal B}$,
\[
    \left|\mathcal L_{\mathcal B}(\Theta,V)
    -\mathcal L_{\mathcal B}(\Theta,0)
    -\sum_{j\in\mathcal S}\langle G_j,V_j\rangle\right|
    \leq \frac{\beta}{2}\lVert V\rVert_\omega^2.
\]
If $g_r=0$, every retained gradient block is zero, so the candidate and returned perturbation are zero; Equation~\ref{eq:first-order-hard-loss} holds with both sides zero.
For $g_r>0$, use the singular components retained by the algorithm to define $C_{j,r}$.
This choice does not change $g_r$.
Before radial rescaling, the truncated blocks in Algorithm~\ref{alg:dream2discern} are $\gamma\omega_j C_{j,r}/d$, where $d>0$ is its normalization denominator.
Radial rescaling and the accepted step fraction apply a common nonnegative factor to these blocks.
Since $(\omega_j C_{j,r})_{j\in\mathcal S}$ has scaled norm $g_r$, the returned point has the form
\[
    \widehat V_{\mathcal B,j}
    =\rho_{\mathcal B}\,\omega_j\frac{C_{j,r}}{g_r},
    \qquad j\in\mathcal S,
\]
including $\rho_{\mathcal B}=0$ when the candidate is rejected.
The truncated SVD satisfies $\langle C_j,C_{j,r}\rangle=\lVert C_{j,r}\rVert_F^2$.
Together with $C_j=\omega_jG_j$, this gives
\[
    \begin{aligned}
        \sum_{j\in\mathcal S}\langle G_j,\widehat V_{\mathcal B,j}\rangle
        &=\frac{\rho_{\mathcal B}}{g_r}
          \sum_{j\in\mathcal S}\langle C_j,C_{j,r}\rangle\\
        &=\frac{\rho_{\mathcal B}}{g_r}
          \sum_{j\in\mathcal S}\lVert C_{j,r}\rVert_F^2
          =\rho_{\mathcal B}g_r.
    \end{aligned}
\]
By Equation~\ref{eq:accepted-inner}, $\widehat V_{\mathcal B}=\lambda_{\mathcal B}\overline V_{\mathcal B}$ with $\lambda_{\mathcal B}\in[0,1]$, so the returned point lies on the required segment.
The remainder bound and the inner-product identity then yield Equation~\ref{eq:first-order-hard-loss}.
\end{proof}

\noindent \textbf{A sufficient condition for accepting the full candidate.}
For a nonzero candidate, $g_r>0$, and the same inner-product identity and remainder bound give
\[
    \mathcal L_{\mathcal B}(\Theta,\overline V_{\mathcal B})
    -\mathcal L_{\mathcal B}(\Theta,0)
    \geq \lVert\overline V_{\mathcal B}\rVert_\omega g_r
      -\frac{\beta}{2}\lVert\overline V_{\mathcal B}\rVert_\omega^2.
\]
Consequently, $\beta\lVert\overline V_{\mathcal B}\rVert_\omega<2g_r$ ensures a strictly positive loss increase, so the first tested point is accepted in exact arithmetic.
This condition also covers $\beta=0$.
If all six tested points are rejected, the search returns zero; a smaller untested step may still increase the loss.

\section{Additional Experimental Setups}
\label{app:setup}

\subsection{Datasets and Evaluation Partitions}
\label{app:exp-data}

\noindent \textbf{Dataset scope and labels.}
\textsc{AFTraj-2K}~\citep{zhang2026agentforesight} contains successful and failed multi-agent executions across mathematical reasoning, coding, and tool-based tasks.
We use its recorded outcomes, with task failure mapped to $Y_i=1$ and success to $Y_i=0$.
For StepShield~\citep{felicia2026stepshield}, we use the paired clean/rogue trajectory subset in our evaluation corpus, with policy violations (failure) mapped to $Y_i=1$ and clean (success) trajectories to $Y_i=0$.
Table~\ref{tab:exp-data} distinguishes the available training trajectories from those retained after constructing the training-prefix index.

\begin{table}[ht]
\centering
\small
\caption{Dataset statistics. Training-index rows count prefixes; test counts refer to trajectories.}
\label{tab:exp-data}
\begin{tabular}{lrr}
\toprule
 & \textsc{AFTraj-2K} & StepShield \\
\midrule
Available training trajectories & 1,944 & 1,278 \\
Trajectories in the training index & 1,821 & 1,253 \\
Training-index rows & 8,454 & 7,828 \\
Test trajectories & 332 & 216 \\
Test trajectories with $Y_i=1$ & 163 & 108 \\
Test trajectories with $Y_i=0$ & 169 & 108 \\
\bottomrule
\end{tabular}
\end{table}

\noindent \textbf{Partitions and prefix access.}
We retain the recorded \textsc{AFTraj-2K} train/test assignment.
Grouping trajectories by the source-task stems of their identifiers yields 19 parent groups shared across the partitions, covering 26 test trajectories.
For the StepShield subset, the split contains 639 training pairs and 108 test pairs, with both members of a pair assigned to the same partition.
Training prefixes are selected using the annotations and balancing procedure in Appendix~\ref{app:prefix-construction}.
The training index includes terminal prefixes, whereas evaluation uses all nonterminal prefixes $1\le t<T_i$.
Auditor inputs contain the task context and observed interactions but exclude outcome labels and decisive-step annotations; \textsc{AFTraj-2K} inputs also omit the stored thought field.

\subsection{Training Prefix Construction}
\label{app:prefix-construction}
For the annotated training corpora, we select prefixes offline from completed trajectories.
For a failed trajectory $i$, eligible prefixes end at $t=K_i,\ldots,T_i$, where $K_i$ is its annotated decisive step.
For a successful trajectory, we sample a starting step from the empirical distribution of training failure annotations, resampling starts at or beyond $T_i$ and using a valid fallback when necessary.
Candidate prefixes extend from this start through $T_i$, and each retains its trajectory's terminal outcome as the prediction target.
We subsample equal numbers of success and failure prefixes within each combination of ending step, offset from the selected start, and terminal status.
This balancing applies to the stored prefix pool; Algorithm~\ref{alg:dream2discern} subsequently draws class-balanced batches through the group, trajectory, and prefix hierarchy.
The annotations guide selection only and are excluded from auditor inputs.
Prefix selection precedes token-budget truncation.

\subsection{Auditor Training and Reporting}
\label{app:exp-training}

\noindent \textbf{Parameterization and input length.}
For all four backbones, we use the auditor parameterization in Section~\ref{sec:steer-to-confuse}.
Query and value projections after the middle Transformer block receive LoRA adapters of rank 8 with $\alpha_{\mathrm{LoRA}}=16$, giving an adapter multiplier of $\alpha_{\mathrm{LoRA}}/8=2$.
We use last-token readout by default: the final-layer hidden state of the last non-padding token is fed to a two-layer MLP with a 256-dimensional hidden layer and GELU activation.
The input budget is 2,048 tokens; longer serialized prefixes retain the first 256 and last 1,792 tokens.
Table~\ref{tab:exp-training} lists the optimization settings.
We fix one perturbation radius per backbone across both benchmarks, taking the one whose \emph{Dream} step most reduces the auditor's signed margin on training batches, as listed in Table~\ref{tab:exp-radii}.

\noindent \textbf{Training budget and checkpoint selection.}
For each benchmark and backbone, we first fit the adapters and prediction head for 100 supervised steps.
We then apply \textit{Dream} and \textit{Discern} for 1,057 steps on \textsc{AFTraj-2K} or 979 steps on StepShield and evaluate the final checkpoint.
With logical batches of 32 prefixes, these budgets correspond to approximately four times the size of the respective training-prefix index.
Because the hierarchical sampler draws with replacement, an epoch denotes this sampling budget, not a pass visiting every prefix exactly once.
The threshold-fitting procedure and its data source are specified in Appendix~\ref{app:threshold}.
Auditor training uses NVIDIA B200 GPUs with PyTorch 2.8.0 and CUDA 12.8.

\noindent \textbf{Random seeds and uncertainty.}
Tables~\ref{tab:main_results} and~\ref{tab:cross-family} report means and sample standard deviations over training seeds $0,1,2$ for \method{}, SAFE, Hide-and-Seek, and TRACES across all four backbones.
For \method{}, runs within each benchmark and backbone share the same 100-step, seed-0 warm-start checkpoint and initialize a fresh optimizer and random state for subsequent training.
The data split and threshold-fitting seed remain fixed across runs.

\begin{table}[ht]
\centering
\small
\caption{Training settings for \method{}.}
\label{tab:exp-training}
\begin{tabular}{ll}
\toprule
Setting & Value \\
\midrule
Optimizer & AdamW \\
Adapter and head learning rates & $10^{-4}$ \\
Weight decay & $0$ \\
Gradient clipping norm & $1$ \\
Logical batch size & 32 prefixes, 16 per outcome class \\
Microbatch size / accumulation steps & $8$ / $4$ \\
Supervised warm-start steps & 100 \\
Subsequent steps, \textsc{AFTraj-2K} / StepShield & 1,057 / 979 \\
Perturbation rank cap & $r=8$ \\
Gradient evaluations per \textit{Dream} search & One at the unperturbed auditor \\
Candidate evaluation budget & At most six, as in Algorithm~\ref{alg:dream2discern} \\
Nominal threshold-fitting level & $\alpha=0.05$ \\
\bottomrule
\end{tabular}
\end{table}

\begin{table}[ht]
\centering
\small
\caption{Backbone-specific perturbation radii. Each value is used on both benchmarks.}
\label{tab:exp-radii}
\begin{tabular}{lc}
\toprule
Auditor backbone & $\gamma$ \\
\midrule
Qwen3-4B & $3\times10^{-4}$ \\
Qwen3-8B & $3\times10^{-3}$ \\
Llama-3.2-3B-Instruct & $3\times10^{-3}$ \\
Llama-3.1-8B-Instruct & $3\times10^{-4}$ \\
\bottomrule
\end{tabular}
\end{table}

\subsection{Baseline Implementations}
\label{app:exp-baselines}

\noindent \textbf{Open-weight judges.}
The Qwen and Llama instruction models are prompted to judge the observed prefix without additional task-specific fitting.
Their failure scores are the \texttt{FAIL} probabilities from a softmax over the \texttt{FAIL} and \texttt{OK} answer logits.
We use a 2,048-token prefix budget, preserving the task context and recent interactions when truncation is needed.
LlamaGuard3-8B uses a task-failure prompt on \textsc{AFTraj-2K} and its standard safety taxonomy on StepShield.
For each benchmark, we report the complete LlamaGuard variant winning more of the four test metrics among the native and task-specific prompts.

\noindent \textbf{Closed-source judges.}
The closed-source comparisons use the recorded outputs of GPT-5.4, Claude-Sonnet-4.6, Grok-4.3, and Kimi-K2.7-Code.
For these models, the returned continuous risk score is used for AUROC and FPR95, while a returned failure label supplies the binary prediction used for wACC and TWA.
These decisions therefore use the models' reported labels rather than the fitted threshold band used for the other rows.
Unparseable responses are retained as no-alarm predictions and assigned scores below all valid responses in that run.
Each closed-source row represents one recorded evaluation run.
The output budget is 4,096 tokens.
On \textsc{AFTraj-2K}, GPT-5.4 also participated in planning fault injections during dataset construction.

\noindent \textbf{AgentDebug~\citep{zhu2025llm} and AgentForesight~\citep{zhang2026agentforesight}.}
We adapt AgentDebug to Llama-3.1-8B-Instruct and apply its two-stage diagnosis to the observed prefix, removing the prompt's premise that failure has already occurred.
An additional scoring stage converts the diagnosis into a failure score using the \texttt{FAIL} probability from a softmax over the \texttt{FAIL}/\texttt{OK} logits.
Its prompt allocates 1,536 tokens to context and 3,072 tokens to the interaction block.
For AgentForesight, we use the released AgentForesight-7B checkpoint with its native prompt and greedy decoding, without additional training in this study.
At the generated verdict position, probabilities are normalized over \texttt{SAFE} and the ten digit tokens; the total digit probability supplies the alarm score.
We use a 2,048-token budget, retaining task context and recent interactions.

\noindent \textbf{Token uncertainty.}
We compute maximum and average token entropy and negative log-likelihood under Qwen3-8B.
These scores measure a proxy model's uncertainty about the recorded text, since the logs do not provide the acting agents' own token probabilities.
They do not use failure labels to fit their scoring functions; their operating thresholds are fitted separately.

\noindent \textbf{Learned sequential scorers.}
SAFE~\citep{gu2026safe}, Hide-and-Seek~\citep{park2026hide}, and TRACES~\citep{li2026traces} use frozen encoders from the same Qwen or Llama backbones as \method{} in each comparison.
The feature at each step is extracted from its observed prefix with a 2,048-token context budget, rather than from the isolated interaction alone.
The downstream scorers are trained on the corresponding benchmark, using each method's objective and training unit.
Our SAFE adaptation uses a one-layer LSTM with hidden width 256; the Hide-and-Seek adaptation uses a one-layer GRU of the same width with its inter- and intra-trajectory losses.
SAFE uses 1,057 optimization steps on \textsc{AFTraj-2K} and 979 on StepShield, with 32 prefix rows per batch.
Hide-and-Seek uses the same step counts with batches of 32 trajectory bags.
TRACES uses 30 epochs of Stage-A pretraining, freezes that stage, and performs 528 Stage-B steps on \textsc{AFTraj-2K} or 489 on StepShield with batches of 64 prefix rows.
The reported TRACES variant includes its auxiliary prefix and ranking losses, with loss weights $1:0.2:0.05$ for the final, prefix, and ranking objectives.

\subsection{Evaluation Metrics}
\label{app:exp-metrics}

\noindent \textbf{Continuous-score metrics.}
For each test trajectory, let
\begin{equation}
    M_i=\max_{1\le t<T_i}s_{i,t}.
    \label{eq:exp-max-score}
\end{equation}
We compute AUROC from $M_i$ and $Y_i$.
FPR95 is the fraction of successful trajectories exceeding or equaling the highest empirical score threshold that attains at least 95\% recall on failed trajectories.
These two metrics are computed from the continuous trajectory scores independently of the runtime threshold band.

\noindent \textbf{Decisions and detection time.}
Let $\mathcal P=\{i:Y_i=1\}$ and $\mathcal N=\{i:Y_i=0\}$ denote the failed and successful test trajectories.
For evaluation, a trajectory is counted as detected if any nonterminal prefix receives a failure prediction, and its detection time is the first such step:
\begin{equation}
    D_i=\max_{1\le t<T_i}\widehat Y_{i,t},\qquad
    t_i^{\mathrm{det}}=\min\{t:1\le t<T_i,\ \widehat Y_{i,t}=1\}.
    \label{eq:exp-detection}
\end{equation}
The minimum is $+\infty$ when no prefix receives a failure prediction.
This first-positive summary is used to compute evaluation metrics; the auditor itself continues to issue a current prediction at each prefix.
Write $\mathrm{TP}=\{i\in\mathcal P:D_i=1\}$ and $\mathrm{TN}=\{i\in\mathcal N:D_i=0\}$.
Then $\mathrm{TPR}=|\mathrm{TP}|/|\mathcal P|$ and $\mathrm{TNR}=|\mathrm{TN}|/|\mathcal N|$.

\noindent \textbf{Accuracy and timeliness.}
Following the trajectory-level metrics used by Hide-and-Seek~\citep{park2026hide}, we compute
\begin{equation}
    \begin{aligned}
        \mathrm{wACC}
        &=\pi_0\,\mathrm{TNR}+(1-\pi_0)\,\mathrm{TPR},
        \qquad \pi_0=\frac{|\mathcal N|}{|\mathcal P|+|\mathcal N|},\\
        \mathrm{TWA}
        &=\frac12\left[\frac{1}{|\mathcal P|}
          \sum_{i\in\mathrm{TP}}\left(1-\frac{t_i^{\mathrm{det}}}{T_i}\right)
          +\mathrm{TNR}\right].
    \end{aligned}
    \label{eq:exp-accuracy}
\end{equation}
With the empirical class proportions, wACC equals the fraction of correctly classified trajectories.
TWA averages timeliness over all failed trajectories, so an undetected failure contributes zero.
Detection steps are one-indexed and normalized by the full trajectory length $T_i$, even though terminal prefixes are excluded from evaluation.
For the trade-off curves in Figure~\ref{fig:further-analysis}, we average $(t_i^{\mathrm{det}}-1)/T_i$ over failed trajectories, assigning one to each undetected failure.
Higher AUROC, wACC, and TWA are better; lower FPR95 is better.
For score-based decisions we use nominal $\alpha=0.05$; for the closed-source judges we use their returned labels, as described in Appendix~\ref{app:exp-baselines}.

\section{Supplementary Experimental Analyses}
\label{app:additional-analyses}

\subsection{Motivation Diagnostics}
\label{app:motivation-diagnostics}
We detail the trajectory-editing, representation, and learning diagnostics in Section~\ref{sec:prefix-auditor}.

\subsubsection{Trajectory Editing and Outcome Re-verification}
\label{app:trajectory-editing}

We select 128 mathematical reasoning and coding trajectories from the training pool of \textsc{AFTraj-2K} and make four edit attempts at one nonterminal action per source.
A GPT-5.4-mini editor uses the public task and observed history to propose an alternative action for the same agent, preserving the action type and tool schema without access to future interactions, outcome labels, or auditor feedback.
After replaying preceding actions and matching their observations to the recorded history, we apply the edit and resume execution with GPT-5.4-mini under a 40-message limit covering replay and continuation.
Coding outcomes are determined by running the last program submitted by \texttt{CodeWriter} against the source task's tests; these tests are not included in the continuation agents' system prompts.
Mathematical outcomes are checked against reference answers with LLM-assisted semantic review.
We retain 326 verified continuations from 97 sources with valid execution histories and resolved labels under consistent outcome checks.
We compare each continuation's outcome with its verified source label to assess label reuse after editing and re-execution.

\subsubsection{Representation and Learning Diagnostics}
\label{app:augmentation-diagnostics}

For Figure~\ref{fig:augmentation-diagnostics}, we use a frozen Qwen3-4B encoder and the verified edits described in Appendix~\ref{app:trajectory-editing}.
Panel~(a) projects 97 original--edited prefix pairs onto jointly fitted principal components; density contours use all 326 edits with equal total weight per source trajectory.
For panel~(b), we train two-layer MLP prediction heads on original prefixes alone (\emph{Original}) or the combined original and edited pool (\emph{+Edited}), using the same initialization and 1,000 AdamW updates with batch size 128 and learning rate $10^{-3}$.
We expand the original training trajectories into all their prefixes and add prefixes from the verified continuations only in the \emph{+Edited} condition.
Feature standardization and class weights are computed from the original training pool and shared by both conditions.
The plotted heads use seed 0.
With the \emph{Original} head fixed, we temporarily perturb the encoder's query and value projections (\emph{+Dream}), using per-projection rank cap 8 and relative radius $3\times10^{-4}$; the perturbation is selected on a balanced batch of 32 original training prefixes.
All conditions are evaluated on the same 2,820 held-out prefixes after excluding source groups represented in training, using the mean of the success-class and failure-class BCE losses.
The loss plane varies the head parameters along the difference between the two trained heads and scales the selected encoder perturbation independently.
The displayed coordinates are $x=v(L_D-L_O)$ and $y=u(L_E-L_O)$, where $u$ interpolates the head parameters, $v$ scales the perturbation, and $L_O,L_E,L_D$ denote the three labeled losses.
Arrow lengths reflect endpoint loss changes rather than parameter norms; dashed contours mark measurements beyond the perturbation budget.

Under the fixed training budget, adding the verified edits does not reduce held-out prediction loss.
The training-selected \emph{Dream} perturbation increases loss on the same held-out prefixes with the \emph{Original} head fixed, supporting hard latent variant construction beyond the search batch.

\subsection{Mechanism Analysis}
\label{app:mechanism-analysis}
Both experiments use \textsc{AFTraj-2K} with fixed seed-0 auditors, without updating their parameters during measurement.
The loss comparison uses the final Qwen3-8B trained with \method{}, whereas the sensitivity analysis uses the final Qwen3-4B trained without perturbations.

\subsubsection{Variant Construction}
\label{app:mechanism-construction}

\noindent \textbf{Setup.}
We sample 50 batches of 32 prefixes from the 8,454 entries in the training-prefix index described in Appendix~\ref{app:prefix-construction}.
Using sampling seed 0, we draw with replacement uniformly over parent groups, trajectories within each group, and indexed prefixes within each trajectory.
This diagnostic samples directly from the index without balancing the outcome classes.
Both perturbation strategies act on the 34 adapted query/value projections with rank cap $r=8$ and nominal relative radius $\gamma=0.003$.
The \textit{Dream} candidate is the perturbation returned by the solver in Section~\ref{sec:steer-to-confuse}.
For each random draw, we form every projection perturbation as the product of two independent standard Gaussian matrices with inner dimension eight, then jointly rescale the projection perturbations to relative norm $\gamma$.
We generate two independent random candidates per batch without selecting between them by prediction loss.

\noindent \textbf{Measurement and aggregation.}
For each candidate $V$, we measure the change in mean batch BCE relative to the unperturbed auditor:
\begin{equation}
\Delta\mathcal L_{\mathcal B}(V)
=\mathcal L_{\mathcal B}(\Theta,V)-\mathcal L_{\mathcal B}(\Theta,0).
\label{eq:mechanism-loss-change}
\end{equation}
Both losses are recomputed on the same batch in evaluation mode with gradients disabled, and all prefixes receive equal loss weight.
Each \textit{Dream} point is the returned candidate's loss change, while each random point is the median loss change of its two draws.
The summary marks in Figure~\ref{fig:loss-guided-vs-random} show the median and interquartile range, from the 25th to the 75th percentile, across the 50 points for each strategy.
The magnified inset displays the same random points, including negative changes.

\noindent \textbf{Results and interpretation.}
\textit{Dream} increases loss on all 50 batches and exceeds the corresponding random summary on each batch.
Its median loss increase is $0.0307$, compared with $6.20\times10^{-6}$ for random perturbations, whose changes occur on both sides of zero.
The returned \textit{Dream} perturbations have relative norms between $0.00261$ and $0.00280$, while random perturbations use the nominal boundary at $0.003$.
The comparison therefore evaluates the two construction strategies under the same operator scope, rank cap, and nominal budget.
The contrast supports loss-guided variant construction for a fixed auditor; the effect of learning from the resulting variants is assessed separately by the training ablation in Section~\ref{exp:abla}.

\subsubsection{Dream Sensitivity}
\label{app:mechanism-sensitivity}

\noindent \textbf{Setup.}
We hold the unperturbed Qwen3-4B fixed and construct eight \textit{Dream} perturbations from independently sampled training batches, each containing 16 successful and 16 failed prefixes.
Each solver call uses the same fixed auditor, the 34 adapted query/value projections, rank cap eight, and nominal relative radius $0.0003$.
We evaluate the returned candidates on nonterminal test prefixes after excluding every parent group represented in training and checking that no retained tokenized input exactly duplicates a training input after truncation.
This leaves 2,820 prefixes from 306 trajectories and 198 parent groups; held-out labels are not used to construct perturbations.

\noindent \textbf{Sensitivity and prediction error.}
For a held-out prefix $q=(i,t)$, let $s_q=f_\Theta(X_q)$ and let $s_q^{(k)}$ be the failure score obtained after applying the returned perturbation $\widehat V_{\mathcal B_k}$ to the same auditor.
We define its \emph{Dream sensitivity} as
\begin{equation}
H_q=\frac{1}{M}\sum_{k=1}^{M}\left|s_q^{(k)}-s_q\right|,\qquad M=8.
\label{eq:dream-sensitivity}
\end{equation}
Larger $H_q$ means that the prefix's failure score responds more strongly to the \textit{Dream} perturbations.
Prediction errors are measured for the unperturbed auditor using $\widehat Y_q=\mathbf 1\{s_q\geq\zeta_t\}$ and the recorded outcome $Y_q$.
We fit the auditor's threshold band from its saved training predictions following the procedure in Appendix~\ref{app:threshold}, with calibration seed 0 and $\alpha=0.05$, and keep this band fixed throughout the analysis.

\noindent \textbf{Grouping and uncertainty.}
We divide prefixes into five groups using the 20th, 40th, 60th, and 80th percentiles of $H_q$, yielding 564 prefixes per group.
To account for different numbers of prefixes per parent group, each prefix receives weight $w_q=1/N_{g(q)}$, where $N_{g(q)}$ is its parent's prefix count in the entire held-out pool.
For a sensitivity bin $Q_j$, the reported error rate is
\begin{equation}
\widehat e_j=\frac{\sum_{q\in Q_j}w_q\mathbf 1\{\widehat Y_q\neq Y_q\}}
{\sum_{q\in Q_j}w_q}.
\label{eq:dream-sensitivity-error}
\end{equation}
These weights are fixed before binning and normalized within each bin.
Error bars are 95\% percentile intervals from 1,000 bootstrap resamples, each drawing 198 parent groups with replacement.
All prefixes of a sampled parent inherit its resampling multiplicity, while the auditor, threshold band, perturbations, and bin boundaries remain fixed.
The intervals quantify uncertainty over held-out parent groups conditional on this fixed auditor and perturbation bank.

\noindent \textbf{Results and interpretation.}
The weighted error rate rises from $4.4\%$ in the lowest sensitivity quintile to $51.1\%$ in the highest, with increasing point estimates across all five bins.
The perturbations use only training batches, so this association connects the loss-guided construction to prediction difficulties on held-out prefixes.
These results support constructing hard latent variants that reflect weaknesses in the auditor's learned outcome discrimination beyond the construction batches.

\subsection{Additional Training Ablations}
\label{app:training-ablations}

\subsubsection{Perturbation Strategy}
The three variants in Table~\ref{tab:training-ablation-review} use training seed 0 and share the data, backbone, supervised initialization, logical batch size of 32, and optimizer-step budget.
The unperturbed variant updates the adapters and prediction head on the original batch loss.
For \emph{Random Pert.}, each logical batch receives one shared random perturbation of the adapted query/value projections.
We construct each projection perturbation as a product of two independent Gaussian matrices with inner dimension eight and jointly normalize them to $\lVert V\rVert_\omega=\gamma$, using the backbone-specific radii in Table~\ref{tab:exp-radii}.
The random candidate is used directly, without loss-based selection or backtracking, and the adapters and head are updated on the perturbed batch loss.
\emph{Dream Pert.} uses Algorithm~\ref{alg:dream2discern}; all three variants are evaluated with $V=0$.
This training comparison is distinct from the fixed-auditor diagnostic in Appendix~\ref{app:mechanism-construction}.

\subsubsection{Prefix Readout}
\label{app:readout-pooling}
To assess our default last-token readout, we compare it with mean pooling over all non-padding tokens in the final Transformer layer.
For each readout, we train a separate auditor using the same backbone, data, perturbation settings, and optimization budget, including a 100-step supervised warm start with that readout.
All runs use training seed 0.
Table~\ref{tab:readout-pooling} shows that last-token readout yields higher AUROC in all four comparisons and higher wACC and TWA with Qwen3-4B.
With Qwen3-8B, mean pooling yields higher TWA on both benchmarks and higher wACC on \textsc{AFTraj-2K}, while last-token readout retains higher wACC on StepShield.
Last-token readout therefore improves score ranking in these comparisons, while its accuracy and timeliness gains at the calibrated operating point depend on the backbone and benchmark.

\begin{table}[ht]
\centering
\small
\setlength{\tabcolsep}{4pt}
\caption{Prefix-readout comparison with training seed 0. All metrics are percentages.}
\label{tab:readout-pooling}
\begin{tabular}{llcccccc}
\toprule
& & \multicolumn{3}{c}{\textsc{AFTraj-2K}} & \multicolumn{3}{c}{StepShield} \\
\cmidrule(lr){3-5}\cmidrule(lr){6-8}
Backbone & Readout & AUROC$\uparrow$ & wACC$\uparrow$ & TWA$\uparrow$ & AUROC$\uparrow$ & wACC$\uparrow$ & TWA$\uparrow$ \\
\midrule
\multirow{2}{*}{Qwen3-4B} & Last token (default) & $95.31$ & $90.36$ & $72.28$ & $92.91$ & $90.28$ & $57.48$ \\
& Mean pooling & $91.44$ & $89.76$ & $69.56$ & $87.27$ & $79.17$ & $47.61$ \\
\midrule
\multirow{2}{*}{Qwen3-8B} & Last token (default) & $96.21$ & $91.57$ & $73.70$ & $94.59$ & $88.43$ & $56.42$ \\
& Mean pooling & $92.95$ & $92.17$ & $74.56$ & $91.98$ & $87.04$ & $57.06$ \\
\bottomrule
\end{tabular}
\end{table}

\subsection{Sensitivity to Perturbation Radius}
\label{app:radius-sensitivity}
We vary $\gamma\in\{10^{-4},3\times10^{-4},10^{-3},3\times10^{-3},10^{-2}\}$ for Qwen3-4B and Qwen3-8B on both benchmarks, keeping the remaining training and evaluation settings fixed.
All results use training seed 0.
Table~\ref{tab:radius-sensitivity} shows that larger radii do not consistently improve either metric, and the best-performing setting differs across backbones and benchmarks.
Relative to the main settings, $\gamma=10^{-2}$ lowers TWA for both backbones on \textsc{AFTraj-2K} and substantially lowers wACC on StepShield.

\begin{table}[ht]
\centering
\small
\caption{Perturbation-radius sensitivity with training seed 0. Metrics are percentages; $\dagger$ marks the radius used in the main experiments.}
\label{tab:radius-sensitivity}
\begin{tabular}{llcccc}
\toprule
\multirow{2}{*}{Backbone} & \multirow{2}{*}{$\gamma$} & \multicolumn{2}{c}{\textsc{AFTraj-2K}} & \multicolumn{2}{c}{StepShield} \\
\cmidrule(lr){3-4}\cmidrule(lr){5-6}
& & wACC$\uparrow$ & TWA$\uparrow$ & wACC$\uparrow$ & TWA$\uparrow$ \\
\midrule
\multirow{5}{*}{Qwen3-4B} & $10^{-4}$ & $91.27$ & $72.17$ & $87.50$ & $55.57$ \\
 & $3\times10^{-4}$\textsuperscript{$\dagger$} & $90.36$ & $72.28$ & $90.28$ & $57.48$ \\
 & $10^{-3}$ & $91.27$ & $73.58$ & $86.11$ & $53.72$ \\
 & $3\times10^{-3}$ & $91.87$ & $73.45$ & $80.56$ & $54.42$ \\
 & $10^{-2}$ & $89.76$ & $67.66$ & $78.70$ & $55.82$ \\
\midrule
\multirow{5}{*}{Qwen3-8B} & $10^{-4}$ & $90.06$ & $71.79$ & $90.74$ & $57.50$ \\
 & $3\times10^{-4}$ & $89.46$ & $71.83$ & $87.50$ & $55.69$ \\
 & $10^{-3}$ & $91.57$ & $74.07$ & $89.81$ & $57.04$ \\
 & $3\times10^{-3}$\textsuperscript{$\dagger$} & $91.57$ & $73.70$ & $88.43$ & $56.42$ \\
 & $10^{-2}$ & $89.46$ & $68.44$ & $62.04$ & $49.66$ \\
\bottomrule
\end{tabular}
\end{table}

\FloatBarrier
\section{Limitations}
\label{app:limitations}

Our evaluation focuses on textual execution histories and binary terminal outcomes.
Extending the auditor to visual observations or finer-grained outcome categories remains future work.
We assess prediction quality on recorded trajectories, with downstream intervention left to the deployment system.
Evaluating how auditor-guided replanning, tool restrictions, or human assistance affects task success and execution cost requires closed-loop experiments.

\end{document}